\newcommand*\samethanks[1][\value{footnote}]{\footnotemark[#1]}
\newtheorem{theorem}{Theorem}[section]
\newtheorem{lemma}[theorem]{Lemma}
\title{DORA The Explorer: Directed Outreaching Reinforcement Action-Selection}
\author{Leshem Choshen\thanks{These authors contributed equally to this work} \\
	School of Computer Science and Engineering\\
	and Department of Cognitive Sciences\\	
	The Hebrew University of Jerusalem\\
	\texttt{leshem.choshen@mail.huji.ac.il} \\
	\And
	Lior Fox\samethanks \\
	The Edmond and Lily Safra Center for Brain Sciences\\
	The Hebrew University of Jerusalem\\
	\texttt{lior.fox@mail.huji.ac.il} \\
	\And
	Yonatan Loewenstein \\
	The Edmond and Lily Safra Center for Brain Sciences,\\ Departments of Neurobiology and Cognitive Sciences\\ and the Federmann Center for the Study of Rationality\\ The Hebrew University of Jerusalem \\
	\texttt{yonatan@huji.ac.il}
}
\begin{document}

\maketitle

\begin{abstract}
Exploration is a fundamental aspect of Reinforcement Learning, typically implemented using stochastic action-selection. Exploration, however, can be more efficient if directed toward gaining new world knowledge. Visit-counters have been proven useful both in practice and in theory for directed exploration. However, a major limitation of counters is their locality. While there are a few model-based solutions to this shortcoming, a model-free approach is still missing.
We propose $E$-values, a generalization of counters that can be used to evaluate the propagating exploratory value over state-action trajectories. We compare our approach to commonly used RL techniques, and show that using $E$-values improves learning and performance over traditional counters. We also show how our method can be implemented with function approximation to efficiently learn continuous MDPs. We demonstrate this by showing that our approach surpasses state of the art performance in the Freeway Atari 2600 game.
\end{abstract}

\section{Introduction}
\begin{quotation}
	\textit{''If there's a place you gotta go - I'm the one you need to know.``}
	\attrib{Map, Dora The Explorer}
\end{quotation}

We consider Reinforcement Learning in a Markov Decision Process (MDP). An MDP is a five-tuple $M=\left(\mathcal{S},\mathcal{A},P,R,\gamma \right)$ where $\mathcal{S}$ is a set of \emph{states} and $\mathcal{A}$ is a set of \emph{actions}. The dynamics of the process is given by $P\left(s'|s,a\right)$ which denotes the \emph{transition probability} from state $s$ to state $s'$ following action $a$. Each such transition also has a distribution $R\left(r|s,a\right)$ from which the \emph{reward} for such transitions is sampled. Given a \emph{policy} $\pi:\mathcal{S}\to\mathcal{A}$, a function -- possibly stochastic -- deciding which actions to take in each of the states, the state-action value function $Q^{\pi}:\mathcal{S\times A}\to\mathbb{R}$ satisfies:
\[
Q^{\pi}\left(s,a\right)=\mathop{\mathbb{E}}_{r,s'\sim R\times P\left(\cdot|s,a\right)}\left[r+\gamma Q^{\pi}\left(s',\pi\left(s'\right)\right)\right]
\]
where $\gamma$ is the \emph{discount factor}. The agent's goal is to find an optimal policy $\pi^{*}$ that maximizes $Q^{\pi}\left(s,\pi\left(s\right)\right)$. For brevity, $Q^{\pi^*}\triangleq Q^*$. There are two main approaches for learning $\pi^{*}$. The first is a \textit{model-based} approach, where the agent learns an internal model of the MDP (namely $P$ and $R$). Given a model, the optimal policy could be found using dynamic programming methods such as Value Iteration \citep{sutton1998reinforcement}. The alternative is a \textit{model-free} approach, where the agent learns only the value function of states or state-action pairs, without learning a model \citep{kaelbling1996reinforcement}\footnote{Supplementary code for this paper can be found at \url{https://github.com/borgr/DORA/}}.

The ideas put forward in this paper are relevant to any model-free learning of MDPs. For concreteness, we focus on a particular example, $Q$-Learning \citep{watkins1992q,sutton1998reinforcement}. $Q$-Learning is a common method for learning $Q^{*}$, where the agent iteratively updates its values of $Q\left(s,a\right)$ by performing actions and observing their outcomes. At each step the agent takes action $a_{t}$ then it is transferred from $s_{t}$ to $s_{t+1}$ and observe reward $r$. Then it applies the update rule regulated by a \emph{learning rate} $\alpha$:
\[
Q\left(s_{t},a_{t}\right)\leftarrow\left(1-\alpha\right)Q\left(s_{t},a_{t}\right)+\alpha\left(r+\gamma\max_{a}Q\left(s_{t+1},a\right)\right).
\]

\subsection{Exploration and Exploitation \label{subsec:explo_explo}}
Balancing between \emph{Exploration} and \emph{Exploitation} is a major challenge in Reinforcement Learning. Seemingly, the agent may want to choose the alternative associated with the highest expected reward, a behavior known as exploitation. However, in that case it may fail to learn that there are better options. Therefore exploration, namely the taking of new actions and the visit of new states, may also be beneficial. It is important to note that exploitation is also inherently relevant for learning, as we want the agent to have better estimations of the values of valuable state-actions and we care less about the exact values of actions that the agent already knows to be clearly inferior. 

Formally, to guarantee convergence to $Q^*$, the Q-Learning algorithm must visit each state-action pair infinitely many times. A naive random walk exploration is sufficient for converging asymptotically. However, such random exploration has two major limitations when the learning process is finite. First, the agent would not utilize its current knowledge about the world to guide its exploration. For example, an action with a known disastrous outcome will be explored over and over again. Second, the agent would not be biased in favor of exploring unvisited trajectories more than the visited ones -- hence "wasting" exploration resources on actions and trajectories which are already well known to it. 

A widely used method for dealing with the first problem is the $\epsilon$-greedy schema \citep{sutton1998reinforcement}, in which with probability $1-\epsilon$ the agent greedily chooses the best action (according to current estimation), and with probability $\epsilon$ it chooses a random action.
Another popular alternative, emphasizing the preference to learn about actions associated with higher rewards, is to draw actions from a \emph{Boltzmann Distribution} (\emph{Softmax}) over the learned $Q$ values, regulated by a \emph{Temperature} parameter. While such approaches lead to more informed exploration that is based on learning experience, they still fail to address the second issue, namely they are not \textbf{directed} \citep{Thrun92efficientexploration} towards gaining more knowledge, not biasing actions in the direction of unexplored trajectories.

Another important approach in the study of efficient exploration is based on \emph{Sample Complexity of Exploration} as defined in the PAC-MDP literature \citep{kakade2003sample}. Relevant to our work is Delayed Q Learning \citep{strehl2006pac}, a model-free algorithm that has theoretical PAC-MDP guarantees. However, to ensure these theoretical guarantees this algorithm uses a  conservative exploration which might be impractical (see also \citep{kolter2009near} and Appendix \ref{appx:delyed}).

\subsection{Current Directed Exploration and its Limitations\label{subsec:current_directed}}

In order to achieve directed exploration, the estimation of an exploration value of the different state-actions (often termed \emph{exploration bonus}) is needed.
The most commonly used exploration bonus is based on \textbf{counting} \citep{Thrun92efficientexploration} -- for each pair $\left(s,a\right)$, store a counter $C\left(s,a\right)$ that indicates how many times the agent performed action $a$ at state $s$ so far. Counter-based methods are widely used both in practice and in theory \citep{kolter2009near,strehl2008analysis,guez2012efficient,busoniu2008comprehensive}. Other options for evaluating exploration include \textbf{recency} and \textbf{value difference} (or \textbf{error}) measures \citep{Thrun92efficientexploration,tokic2011value}.
While all of these exploration measures can be used for directed exploration, their major limitation in a \textit{model-free} settings is that the exploratory value of a state-action pair is evaluated with respect only to its immediate outcome, one step ahead. It seems desirable to determine the exploratory value of an action not only by how much new immediate knowledge the agent gains from it, but also by how much more new knowledge \emph{could} be gained from a trajectory starting with it. The goal of this work is to develop a measure for such exploratory values of state-action pairs, in a \textit{model-free} settings.



\section{Learning Exploration Values}
\subsection{Propagating Exploration Values}

The challenge discussed in \ref{subsec:current_directed} is in fact similar to that of learning the value functions. The value of a state-action represents not only the immediate reward, but also the temporally discounted sum of expected rewards over a trajectory starting from this state and action. Similarly, the "exploration-value" of a state-action should represent not only the immediate knowledge gained but also the expected future gained knowledge.
This suggests that a similar approach to that used for value-learning might be appropriate for learning the exploration values as well, using exploration bonus as the immediate reward. However, because it is reasonable to require exploration bonus to decrease over repetitions of the same trajectories, a naive implementation would violate the Markovian property.

This challenge has been addressed in a \textit{model-based} setting: The idea is to use at every step the current estimate of the parameters of the MDP in order to compute, using dynamic programming, the future exploration bonus \citep{little2014learning}.
However, this solution cannot be implemented in a \textit{model-free} setting. Therefore, a satisfying approach for propagating directed exploration in \textit{model-free} reinforcement learning is still missing. In this section, we propose such an approach.

\subsection{$E$-Values}

We propose a novel approach for directed exploration, based on two parallel MDPs. One MDP is the original MDP, which is used to estimate the value function. The second MDP is identical except for one important difference. We posit that there are no rewards associated with any of the state-actions. Thus, the true value of all state-action pairs is $0$.
We will use an RL algorithm to "learn" the "action-values" in this new MDP which we denote as $E$-values. We will show that these $E$-values represent the missing knowledge and thus can be used for propagating directed exploration.
This will be done by initializing $E$-values to $1$. These positive initial conditions will subsequently result in an optimistic bias that will lead to directed exploration, by giving high estimations only to state-action pairs from which an optimistic outcome has not yet been excluded by the agent's experience.

Formally, given an MDP $M=\left(\mathcal{S},\mathcal{A},P,R,\gamma\right)$ we construct a new MDP $M'=\left(\mathcal{S},\mathcal{A},P,\boldsymbol{0},\gamma_{E}\right)$ with $\boldsymbol{0}$ denoting the identically zero function, and $0 \le\gamma_{E} < 1$ is a discount parameter. The agent now learns both $Q$ and $E$ values concurrently, while initially $E\left(s,a\right)=1$ for all $s,a$. Clearly, $E^{*}=\boldsymbol{0}$. However intuitively, the value of $E\left(s,a\right)$ at a given timestep during training stands for the knowledge, or uncertainty, that the agent has regarding this state-action pair. Eventually, after enough exploration, there is no additional knowledge left to discover which corresponds to $E\left(s,a\right)\to E^{*}\left(s,a\right)=0$.

For learning $E$, we use the SARSA algorithm \citep{rummery1994line,sutton1998reinforcement} which differs from Watkin's $Q$-Learning by being \emph{on-policy}, following the update rule:
$$
E\left(s_{t},a_{t}\right) \leftarrow  \left(1-\alpha_E\right)E\left(s_{t},a_{t}\right)+\alpha_E\left(r+\gamma_{E}E\left(s_{t+1},a_{t+1}\right)\right)
$$
Where $\alpha_E$ is the learning rate. For simplicity, we will assume throughout the paper that $\alpha_E=\alpha$.

Note that this learning rule updates the $E$-values based on $E\left(s_{t+1},a_{t+1}\right)$ rather than $\max_{a}E\left(s_{t+1},a\right)$, thus not considering potentially highly informative actions which are never selected. This is important for guaranteeing that exploration values will decrease when repeating the same trajectory (as we will show below). Maintaining these additional updates doesn't affect the asymptotic space/time complexity of the learning algorithm, since it is simply performing the same updates of a standard $Q$-Learning process twice.

\subsection{$E$-Values as Generalized Counters}

The logarithm of $E$-Values can be thought of as a generalization of visit counters, with propagation of the values along state-action pairs. To see this, let us examine the case of $\gamma_{E}=0$ in which there is no propagation from future states. In this case, the update rule is given by:
\begin{align*}
E\left(s,a\right) \leftarrow & \left(1-\alpha\right)E\left(s,a\right)+\alpha\left(0+\gamma_{E}E\left(s',a'\right)\right)
= \left(1-\alpha\right)E\left(s,a\right)
\end{align*}
So after being visited $n$ times, the value of the state-action pair is $\left(1-\alpha\right)^{n}$, where $\alpha$ is the learning rate. By taking a logarithm transformation, we can see that $\log_{1-\alpha}\left(E\right) = n$. In addition, when $s$ is a terminal state with one action, $\log_{1-\alpha}\left(E\right) = n$ for any value of $\gamma_{E}$.

When $\gamma_{E}>0$ and for non-terminal states, $E$ will decrease more slowly and therefore $\log_{1-\alpha} E$ will increase more slowly than a counter. The exact rate will depend on the MDP, the policy and the specific value of $\gamma_{E}$. Crucially, for state-actions which lead to many potential states, each visit contributes less to the generalized counter, because more visits are required to exhaust the potential outcomes of the action. To gain more insight, consider the MDP depicted in Figure \ref{fig:tree} left, a tree with the root as initial state and the leaves as terminal states. If actions are chosen sequentially, one leaf after the other, we expect that each complete round of choices (which will result with $k$ actual visits of the $\left(s,\mbox{start}\right)$ pair) will be roughly equivalent to one generalized counter. Simulation of this and other simple MDPs show that $E$-values behave in accordance with such intuitions (see Figure \ref{fig:tree} right).

\begin{figure}
	\centering
	\begin{subfigure}{.25\textwidth}
		\includegraphics[width=\columnwidth ,keepaspectratio]{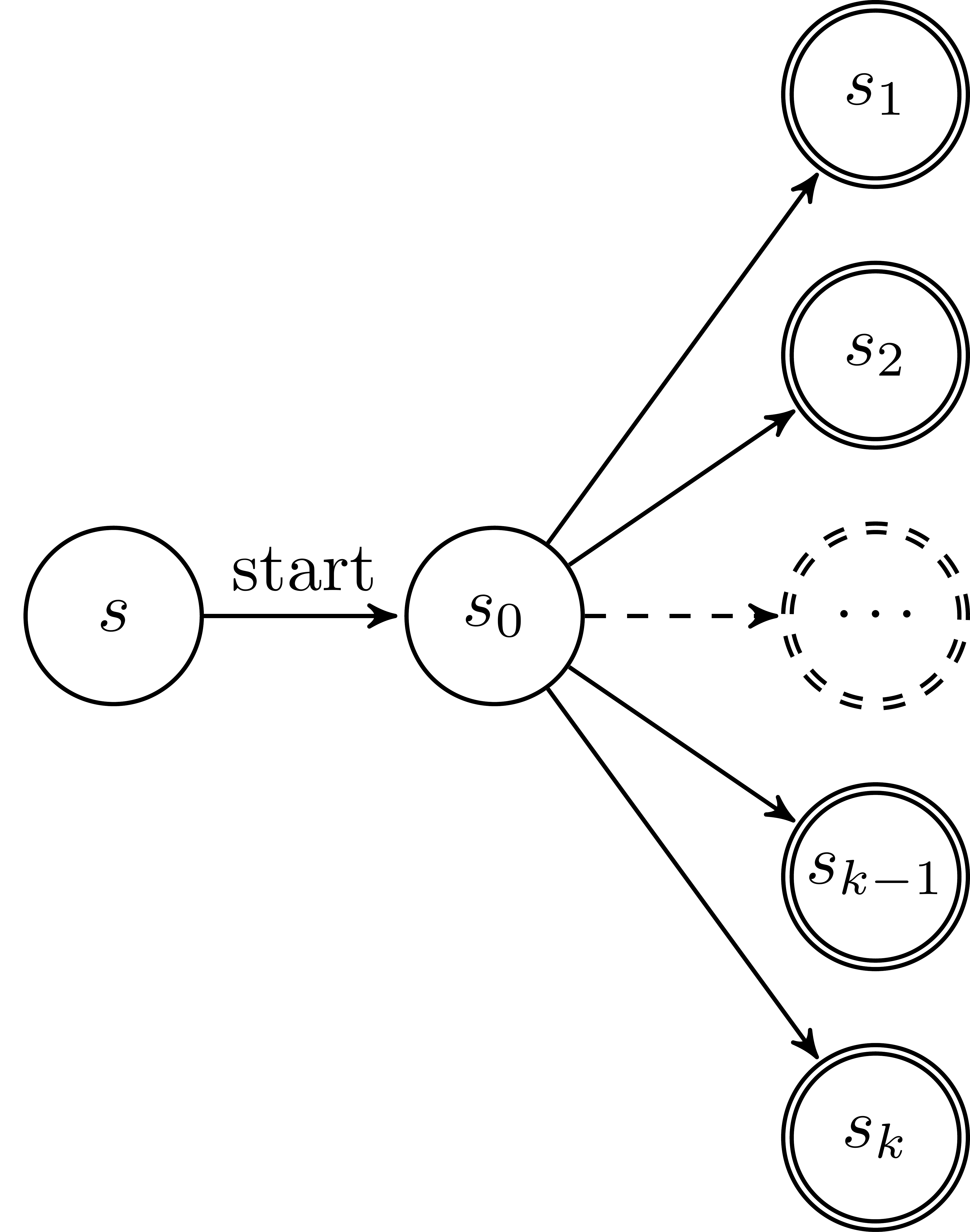}
		
	\end{subfigure}
	\hfil
	\begin{subfigure}{.6\textwidth}
		\centering	
		\includegraphics[width=\columnwidth,keepaspectratio]{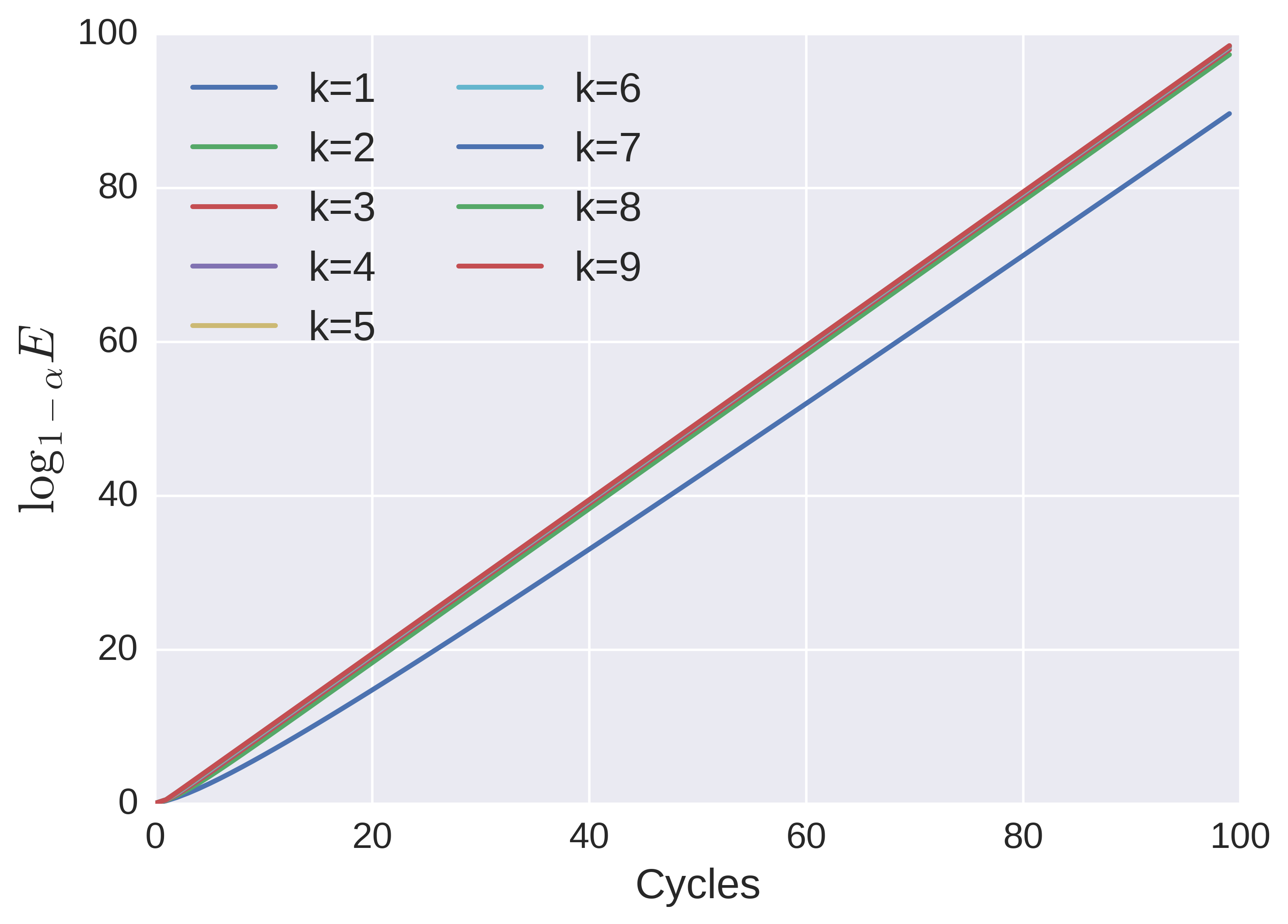}
		
		
	\end{subfigure}
	
	\caption{
		Left: Tree MDP, with $k$ leaves.
		Tree: $\log_{1-\alpha}E\left(s,\mbox{start}\right)$ as function of visit cycles, for different trees of $k$ leaves (color coded).
		For each $k$, a cycle consists of visiting all leaves, hence $k$ visits of the start action.
		$\log_{1-\alpha} E$ behaves as a generalized counter, where each cycle contributes approximately one generalized visit.
		\label{fig:tree}}
	\vspace{-0.2cm}
\end{figure}

An important property of $E$-values is that they decrease over repetitions. Formally, by completing a trajectory of the form $s_0,a_0,\ldots,s_n,a_n,s_0,a_0$ in the MDP, the maximal value of $E\left(s_i,a_i\right)$ will decrease. To see this, assume that $E\left(s_i,a_i\right)$ was maximal, and consider its value after the update:
$$E\left(s_i,a_i \right)\leftarrow \left(1-\alpha\right)E\left(s_i,a_i \right) + \alpha\gamma_{E}E\left(s_{i+1},a_{i+1} \right)$$
Because $\gamma_{E}<1$ and $E\left(s_{i+1},a_{i+1}\right)\le E\left(s_{i},a_{i}\right)$, we get that after the update, the value of $E\left(s_i,a_i\right)$ decreased. For any non-maximal $\left(s_j,a_j\right)$, its value after the update is a convex combination of its previous value and $\gamma_{E}E\left(s_k,a_k\right)$ which is not larger than its composing terms, which in turn are smaller than the maximal $E$-value.


\section{Applying $E$-Values\label{sec:det_act_sel}}
The logarithm of $E$-values can be considered as a generalization of counters. As such, algorithms that utilize counters can be generalized to incorporate $E$-values. Here we consider two such generalizations.

\subsection{$E$-Values as Reward Exploration Bonus}
In model-based RL, counters have been used to create an augmented reward function. Motivated by this result, augmenting the reward with a counter-based exploration bonus has also been used in model-free RL \citep{storck1995reinforcement, NIPS2016_6383}. $E$-Values can naturally generalize this approach, by replacing the standard counter with its corresponding generalized counter ($\log_{1-\alpha}E$).

%
%

To demonstrate the advantage of using $E$-values over standard counters, we tested an $\epsilon$-greedy agent with an exploration bonus of $\frac{1}{\log_{1-\alpha}E}$ added to the observed reward on the bridge MDP (Figure \ref{fig:envs}). To measure the learning progress and its convergence, we calculated the mean square error $\mathbb{E}_{P\left(s,a|\pi^*\right)}\left[\left(Q\left(s,a\right)-Q^*\left(s,a\right)\right)^2\right]$,
where the average is over the probability of state-action pairs when following the optimal policy $\pi^*$. We varied the value of $\gamma_{E}$ from $0$ -- resulting effectively in standard counters -- to $\gamma_{E}=0.9$. Our results (Figure \ref{fig:gamma_augmented}) show that adding the exploration bonus to the reward leads to faster learning. Moreover, the larger the value of $\gamma_{E}$ in this example the faster the learning, demonstrating that generalized counters significantly outperforming standard counters.

\begin{figure*}[t]
	\begin{minipage}{.2\textwidth}
		\centering
		\includegraphics[height=6cm,keepaspectratio]{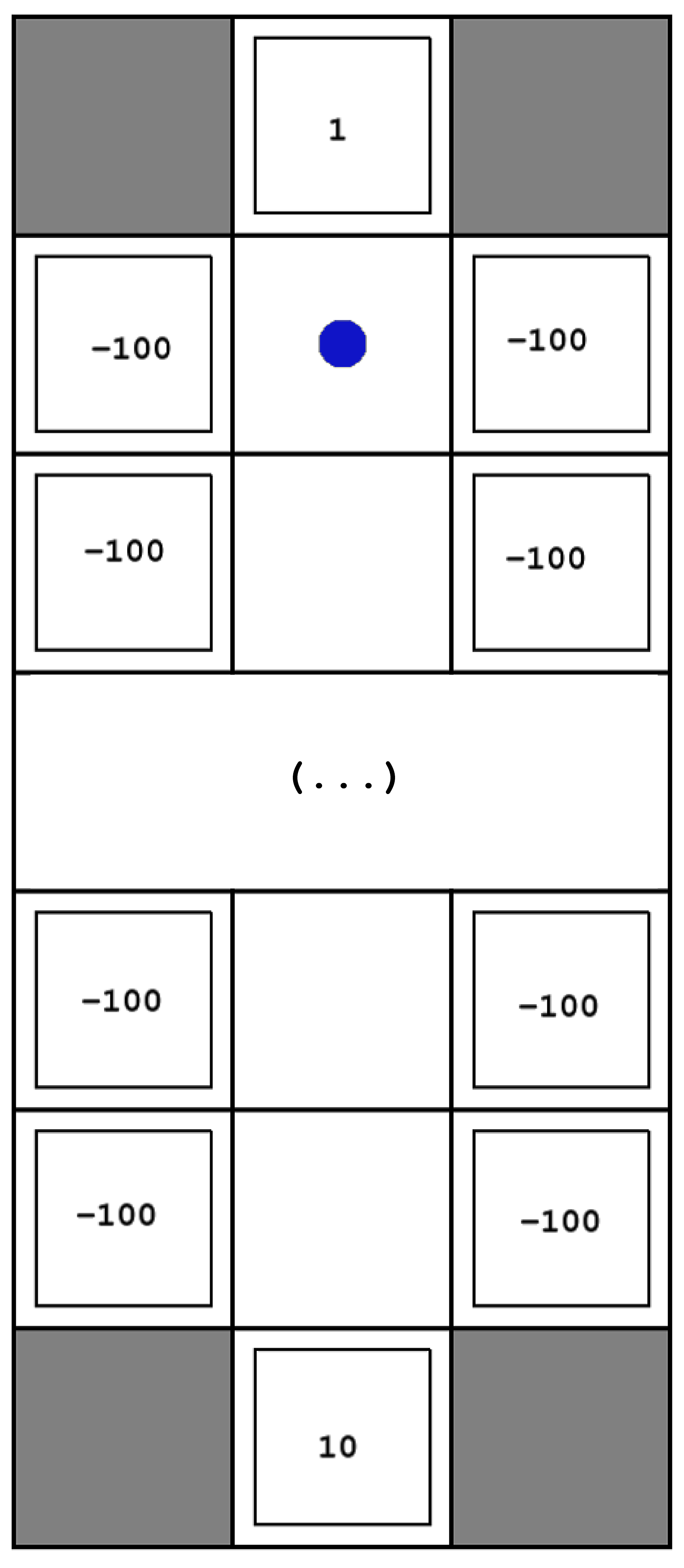}
		
		\caption{Bridge MDP\label{fig:envs}}
	\end{minipage}
	\hfil
	\begin{minipage}{.7\textwidth}
		\centering
		\includegraphics[width=\columnwidth,keepaspectratio]{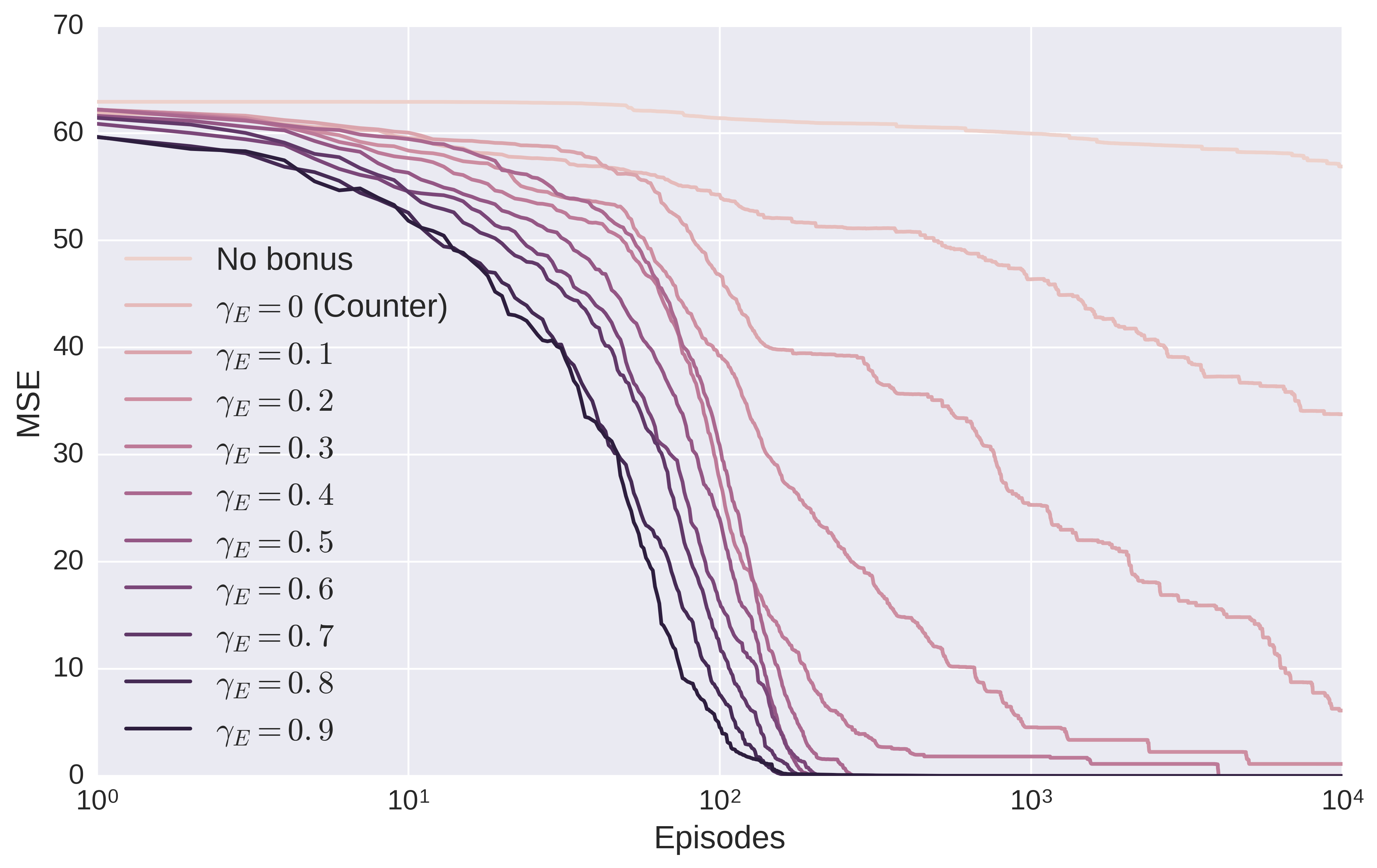}\vfill
		\caption{MSE between $Q$ and $Q^*$ on optimal policy per episode. Convergence of $\epsilon$-greedy on the short bridge environment ($k=5$) with and without exploration bonuses added to the reward. Note the logarithmic scale of the abscissa. \label{fig:gamma_augmented}}
	\end{minipage}
	
\end{figure*}


\vspace{0.5cm}
\subsection{$E$-Values and Action-Selection Rules}
Another way in which counters can be used to assist exploration is by adding them to the estimated $Q$-values. In this framework, action-selection is a function not only of the $Q$-values but also of the counters. Several such action-selection rules have been proposed \citep{Thrun92efficientexploration,meuleau1999exploration,kolter2009near}. These usually take the form of a deterministic policy that maximizes some combination of the estimated $Q$-value with a counter-based exploration bonus. It is easy to generalize such rules using $E$-values -- simply replace the counters $C$ by the generalized counters $\log_{1-\alpha}\left(E\right)$.



\subsubsection{Determinization of Stochastic Decision Rules}\label{subsec:determinization}
Here, we consider a special family of action-selection rules that are derived as deterministic equivalents of standard stochastic rules.
Stochastic action-selection rules are commonly used in RL. In their simple form they include rules such as the $\epsilon$-greedy or Softmax exploration described above. In this framework, exploratory behavior is achieved by stochastic action selection, independent of past choices.
At first glance, it might be unclear how $E$-values can contribute or improve such rules. We now turn to show that, by using counters, for every stochastic rule there exist equivalent deterministic rules. Once turned to deterministic counter-based rules, it is again possible improve them using $E$-values. 

The stochastic action-selection rules determine the frequency of choosing the different actions in the limit of a large number of repetitions, while abstracting away the specific order of choices. This fact is a key to understanding the relation between deterministic and stochastic rules. An equivalence of two such rules can only be an in-the-limit equivalence, and can be seen as choosing a specific realization of sample from the distribution. Therefore, in order to derive a deterministic equivalent of a given stochastic rule, we only have to make sure that the frequencies of actions selected under both rules are equal in the limit of infinitely many steps. As the probability for each action is likely to depend on the current $Q$-values, we have to consider fixed $Q$-values to define this equivalence.

We prove that given a stochastic action-selection rule $f\left(a|s\right)$, every deterministic policy that does not choose an action that was visited too many times until now (with respect to the expected number according to the probability distribution) is a determinization of $f$. Formally, lets assume that given a certain $Q$ function and state $s$ we wish a certain ratio between different choices of actions $a \in A$ to hold. We denote the frequency of this ratio $f_Q\left(a|s\right)$. For brevity we assume $s$ and $Q$ are constants and denote $f_Q\left(a|s\right)=f\left(a\right)$. We also assume a counter $C\left(s,a\right)$ is kept denoting the number of choices of $a$ in $s$. For brevity we denote $C\left(s,a\right) = C\left(a\right)$ and $\sum_{a}C\left(s,a\right)=C$. When we look at the counters after $T$ steps we use subscript $C_T\left(a\right)$. Following this notation, note that $C_T=T$.

\begin{theorem}\label{thm:determinization}
	For any sub-linear function $b\left(t\right)$ and for any deterministic policy which chooses at step $T$ an action $a$ such that $\frac{C_T\left(a\right)}{T}-f\left(a\right) \le b\left(t\right)$ it holds that $\forall a \in \mathcal{A} \space\space$ $$ \lim\limits_{T\rightarrow \infty} \frac{C_T\left(a\right)}{T}=f\left(a\right)$$
\end{theorem}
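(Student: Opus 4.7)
The plan is to establish the one-sided bound $\limsup_T C_T(a)/T \le f(a)$ for every action $a$, and then leverage the identities $\sum_a C_T(a) = T$ and $\sum_a f(a) = 1$ to promote this to convergence. I interpret the policy constraint as $C_T(a) - T\,f(a) \le b(T)$ at the step when $a$ is chosen, which is the natural reading that makes a sub-linear $b$ deliver a vanishing per-step threshold $b(T)/T \to 0$. Without loss of generality I replace $b$ by its running maximum $\bar b(t) = \max_{s \le t} b(s)$, which is non-decreasing and still sub-linear, and I assume $\mathcal{A}$ is finite.

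For the one-sided bound, fix $a$. If $a$ is chosen only finitely many times, then $C_T(a)$ is eventually constant and $C_T(a)/T \to 0 \le f(a)$. Otherwise let $T^*(T) \le T$ denote the most recent step at which $a$ was chosen; by hypothesis $C_{T^*(T)}(a) \le T^*(T)\, f(a) + \bar b(T^*(T)) + 1$, and because the count of $a$ is unchanged between $T^*(T)$ and $T$,
\[
\frac{C_T(a)}{T} \;=\; \frac{C_{T^*(T)}(a)}{T} \;\le\; \frac{T^*(T)\, f(a) + \bar b(T^*(T)) + 1}{T} \;\le\; f(a) + \frac{\bar b(T) + 1}{T},
\]
using $T^*(T) \le T$ together with monotonicity of $\bar b$. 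Sub-linearity of $\bar b$ then yields $\limsup_T C_T(a)/T \le f(a)$.

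To promote this to equality, take any subsequence $T_k \to \infty$ and use compactness of $[0,1]^{|\mathcal{A}|}$ to extract a further subsequence along which $C_{T_k}(a)/T_k \to \ell_a$ for each $a \in \mathcal{A}$. The $\limsup$ bound forces $\ell_a \le f(a)$, while $\sum_a \ell_a = 1 = \sum_a f(a)$ forces equality $\ell_a = f(a)$ for every $a$. Since every convergent subsequence has the same limit, the full sequence converges to $f(a)$, which is the desired conclusion.

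The main obstacle I anticipate is in the one-sided bound: pinning down the right interpretation of the sub-linear threshold, cleanly handling the edge case where $a$ is chosen only finitely often, and the mild monotonization of $b$ (needed so that $\bar b(T^*(T)) \le \bar b(T)$ is a valid step). Once the $\limsup$ bound is secured, the pigeonhole-style finish via $\sum_a C_T(a)/T = 1$ is routine.
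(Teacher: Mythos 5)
Your proof is correct, and its overall architecture is the same as the paper's: establish a vanishing one-sided bound on how much any action can be over-represented, then use $\sum_a C_T(a)=T$ and $\sum_a f(a)=1$ to force the under-represented actions up as well (the paper packages that second step as Lemma \ref{lemma:positives-eq-negatives}; your subsequential-limit argument is the same pigeonhole in different clothing). Where you genuinely diverge is in how the one-sided bound is obtained. The paper's Lemma \ref{lemma:upper-bound-generalized} proves by induction on $t$ that the invariant $\max_a\{C_t(a)/t-f(a)\}\le (1+b(t))/t$ holds uniformly at every step, handling the chosen and non-chosen actions separately in the inductive step; you instead fix an action and track only the last step at which it was chosen, noting its count is frozen afterwards. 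Your route avoids the induction entirely and cleanly isolates the two things it actually needs (monotonization of $b$ and the finitely-often edge case), at the cost of giving only an asymptotic $\limsup$ rather than the paper's explicit per-step bound. You also deserve credit for explicitly flagging that the constraint must be read as $C_T(a)-T f(a)\le b(T)$ (a count-level bound) rather than the literal ratio-level reading of the theorem statement: the paper's inductive step silently relies on exactly this interpretation (its displayed chain only closes if $C_T(a)-(T+1)f(a)\le b$), so your reading is the one under which both proofs are valid.
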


\begin{proof}
	For a full proof of the theorem see Appendix \ref{appx:determ_proof} in the supplementary materials 
\end{proof}

The result above is not a vacuous truth -- we now provide two possible determinization rules that achieves it. One rule is straightforward from the theorem, using $b=\boldsymbol{0}$, choosing $\arg\min_a \frac{C\left(a\right)}{C} - f\left(a\right)$. Another rule follows the probability ratio between the stochastic policy and the empirical distribution: $\arg\max_a \frac{f\left(a\right)}{C\left(a\right)}$.
We denote this determinization $LLL$, because when generalized counters are used instead of counters it becomes $\arg\max_a \mbox{\textbf{l}og} f\left(s,a\right) - \mbox{\textbf{l}og}\mbox{\textbf{l}og}_{1-\alpha} E\left(s,a\right)$.

%

Now we can replace the visit counters $C\left(s,a\right)$ with the generalized counters $\log_{1-\alpha}\left(E\left(s,a\right)\right)$ to create Directed Outreaching Reinforcement Action-Selection -- DORA the explorer. By this, we can transform any stochastic or counter-based action-selection rule into a deterministic rule in which exploration propagates over the states and the expected trajectories to follow.

\begin{algorithm}[H]
\caption{DORA algorithm using $LLL$ determinization for stochastic policy $f$}
\SetAlgoLined
\KwIn{Stochastic action-selection rule $f$, learning rate $\alpha$, Exploration discount factor $\gamma_{E}$}
initialize $Q\left(s,a\right)=0$, $E\left(s,a\right)=1$\;
\ForEach{episode}{
	init $s$\;
	\While{not terminated}{
		Choose $a=\arg\max_{x} \log f_Q\left(x|s\right) - \log\log_{1-\alpha} E\left(s,x\right)$\;
		Observe transitions $\left(s,a,r,s',a'\right)$\;
		$Q\left(s,a\right)\leftarrow\left(1-\alpha\right)Q\left(s,a\right)+\alpha\left(r+\gamma\max_{x}Q\left(s',x\right)\right)$\;
		$E\left(s,a\right)\leftarrow\left(1-\alpha\right)E\left(s,a\right)+\alpha\gamma_E E\left(s',a'\right)$\;
	}
}
\end{algorithm}

\subsection{Results -- Finite MDPs}

To test this algorithm, the first set of experiments were done on Bridge environments of various lengths $k$ (Figure \ref{fig:envs}). We considered the following agents: $\epsilon$-greedy, Softmax and their respective $LLL$ determinizations (as described in \ref{subsec:determinization}) using both counters and $E$-values. In addition, we compared a more standard counter-based agent in the form of a UCB-like algorithm \citep{auer2002finite} following an action-selection rule with exploration bonus of $\sqrt{\frac{\log t}{C}}$. We tested two variants of this algorithm, using ordinary visit counters and $E$-values. Each agent's hyperparameters ($\epsilon$ and temperature) were fitted separately to optimize learning. For stochastic agents, we averaged the results over $50$ trials for each execution. Unless stated otherwise, $\gamma_{E}=0.9$.

We also used a normalized version of the bridge environment, where all rewards are between $0$ and $1$, to compare DORA with the Delayed $Q$-Learning algorithm \citep{strehl2006pac}.

Our results (Figure \ref{fig:agents_compare_all}) demonstrate that $E$-value based agents outperform both their counter-based and their stochastic equivalents on the bridge problem. As shown in Figure \ref{fig:agents_compare_all}, Stochastic and counter-based $\epsilon$-greedy agents, as well as the standard UCB fail to converge. $E$-value agents are the first to reach low error values, indicating that they learn faster. Similar results were achieved on other gridworld environments, such as the Cliff problem \citep{sutton1998reinforcement} (not shown). We also achieved competitive results with respect to Delayed $Q$ Learning (see supplementary \ref{appx:delyed} and Figure \ref{fig:delayed} there). 

The success of $E$-values based learning relative to counter based learning implies that the use of $E$-values lead to more efficient exploration. If this is indeed the case, we expect $E$-values to better represent the agent's missing knowledge than visit counters during learning.
To test this hypothesis we studied the behavior of an $E$-value $LLL$ Softmax on a shorter bridge environment ($k=5$). For a given state-action pair, a measure of the missing knowledge is the normalized distance between its estimated value ($Q$) and its optimal-policy value ($Q^{*}$). We recorded $C$, $\log_{1-\alpha}\left(E\right)$ and $\left\lvert\frac{Q-Q^*}{Q^*}\right\rvert$ for each $s,a$ at the end of each episode. Generally, this measure of missing knowledge is expected to be a monotonously-decreasing function of the number of visits ($C$). This is indeed true, as depicted in Figure \ref{fig:e_c_vs_qstar} (left). However, considering all state-action pairs, visit counters do not capture well the amount of missing knowledge, as the convergence level depends not only on the counter but also on the identity of the state-action it counts.
By contrast, considering the convergence level as a function of the generalized counter (Figure \ref{fig:e_c_vs_qstar}, right) reveals a strikingly different pattern. Independently of the state-action identity, the convergence level is a unique function of the generalized counter. These results demonstrate that generalized counters are a useful measure of the amount of missing knowledge.


\begin{figure*}[t]
	\centering
	\includegraphics[width=.7\columnwidth,keepaspectratio]{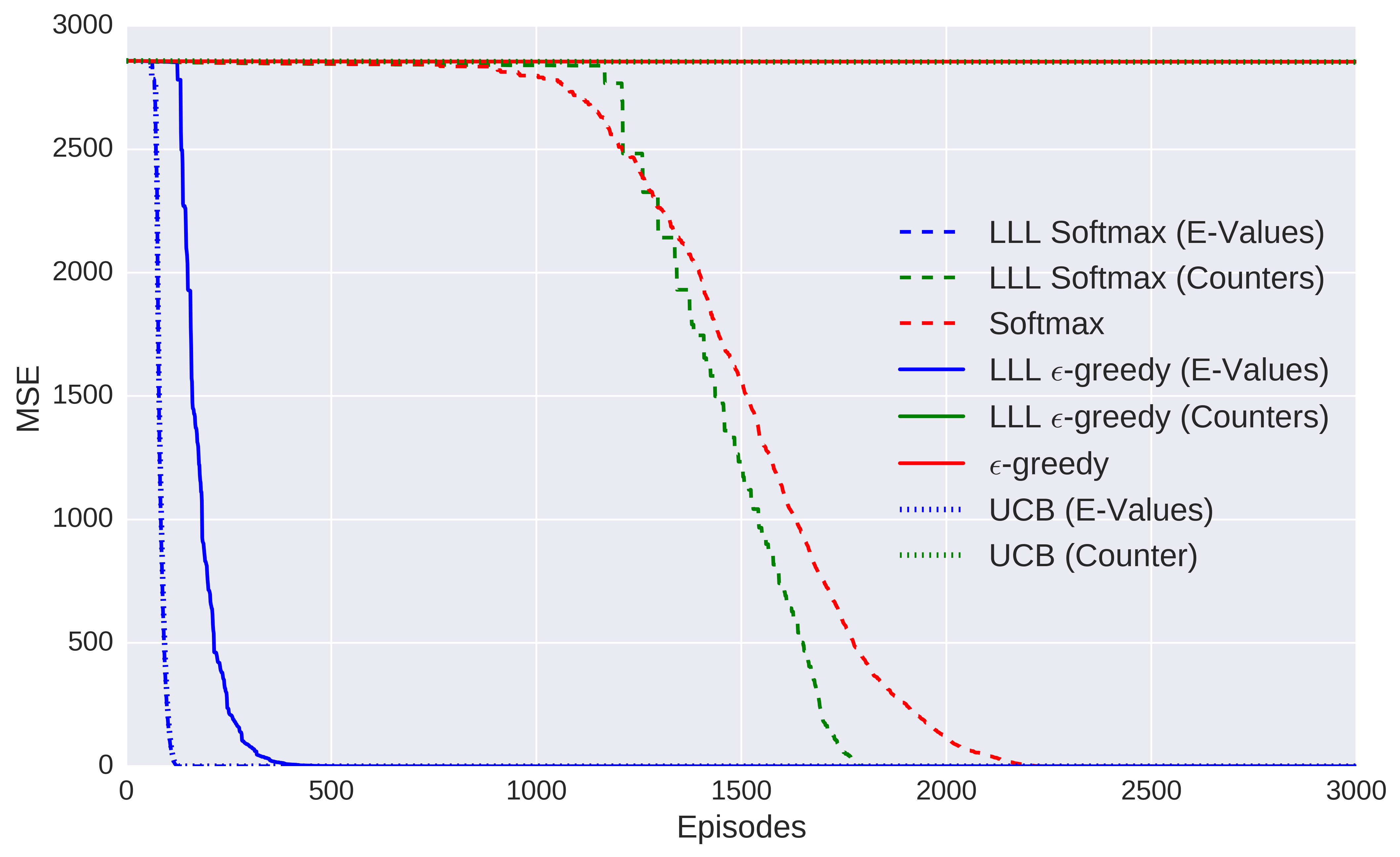}\vfill
	\caption{MSE between $Q$ and $Q^*$ on optimal policy per episode. Convergence measure of all agents, long bridge environment ($k=15$). $E$-values agents are the first to converge, suggesting their superior learning abilities.\label{fig:agents_compare}}
	\label{fig:agents_compare_all}
\end{figure*}

\section{$E$-values with Function Approximation \label{sec:fa}}
So far we discussed $E$-values in the tabular case, relying on finite (and small) state and action spaces. However, a main motivation for using model-free approach is that it can be successfully applied in large MDPs where tabular methods are intractable. In this case (in particular for continuous MDPs), achieving directed exploration is a non-trivial task. Because revisiting a state or a state-action pair is unlikely, and because it is intractable to store individual values for all state-action pairs, counter-based methods cannot be directly applied. In fact, most implementations in these cases adopt simple exploration strategies such as $\epsilon$-greedy or softmax \citep{NIPS2016_6383}.

There are standard model-free techniques to estimate value function in function-approximation scenarios. Because learning $E$-values is simply learning another value-function, the same techniques can be applied for learning $E$-values in these scenarios. In this case, the concept of visit-count -- or a generalized visit-count -- will depend on the representation of states used by the approximating function.

To test whether $E$-values can serve as generalized visit-counters in the function-approximation case, we used a linear approximation architecture on the MountainCar problem \citep{moore1990efficient} (Appendix \ref{appx:eval_fa}). To dissociate $Q$ and $E$-values, actions were chosen by an $\epsilon$-greedy agent independently of $E$-values. As shown in Appendix \ref{appx:eval_fa}, $E$-values are an effective way for counting both visits and generalized visits in continuous MDPs. For completeness, we also compared the performance of $LLL$ agents to stochastic agents on a sparse-reward MountainCar problem, and found that $LLL$ agents learns substantially faster than the stochastic agents (Appendix \ref{appx:fa_mc}).

\subsection{Results -- Function Approximation}
To show our approach scales to complex problems, we used the Freeway Atari 2600 game, which is known as a hard exploration problem \citep{NIPS2016_6383}. We trained a neural network with two streams to predict the $Q$ and $E$-values. First, we trained the network using standard DQN technique \citep{mnih2015human}, which ignores the E-values. Second, we trained the network while adding an exploration bonus of $\frac{\beta}{\sqrt{-\log E}}$ to the reward (In all reported simulations, $\beta=0.05$). In both cases, action-selection was performed by an $\epsilon$-greedy rule, as in \cite{NIPS2016_6383}.

Note that the exploration bonus requires $0<E<1$. To satisfy this requirement, we applied a logistic activation fucntion on the output of the last layer of the $E$-value stream, and initialized the weights of this layer to $0$. As a result, the $E$-values were initialized at 0.5 and satisfied $0<E<1$ throughout the training. In comparison, no non-linearity was applied in the last layer of the $Q$-value stream and the weights were randmoly initialized.


We compared our approach to a DQN baseline, as well as to the density model counters suggested by \citep{NIPS2016_6383}. The baseline used here does not utilize additional enhancements (such as Double DQN and Monte-Carlo return) which were used in \citep{NIPS2016_6383}. Our results, depicted in Figure \ref{fig:fa_freeway}, demonstrate that the use of $E$-values outperform both DQN and density model counters baselines. In addition, our approach results in better performance than in \citep{NIPS2016_6383} (with the mentioned enhancements), converging in approximately $2\cdot10^6$ steps, instead of ~$10\cdot10^6$ steps\footnote{We used an existing implementation for DQN and density-model counters available at \href{https://github.com/brendanator/atari-rl}{https://github.com/brendanator/atari-rl}. Training with density-model counters was an order of magnitude slower than training with two-streamed network for $E$-values}.

\begin{figure*}[t]
	\centering
	\includegraphics[width=\columnwidth,keepaspectratio]{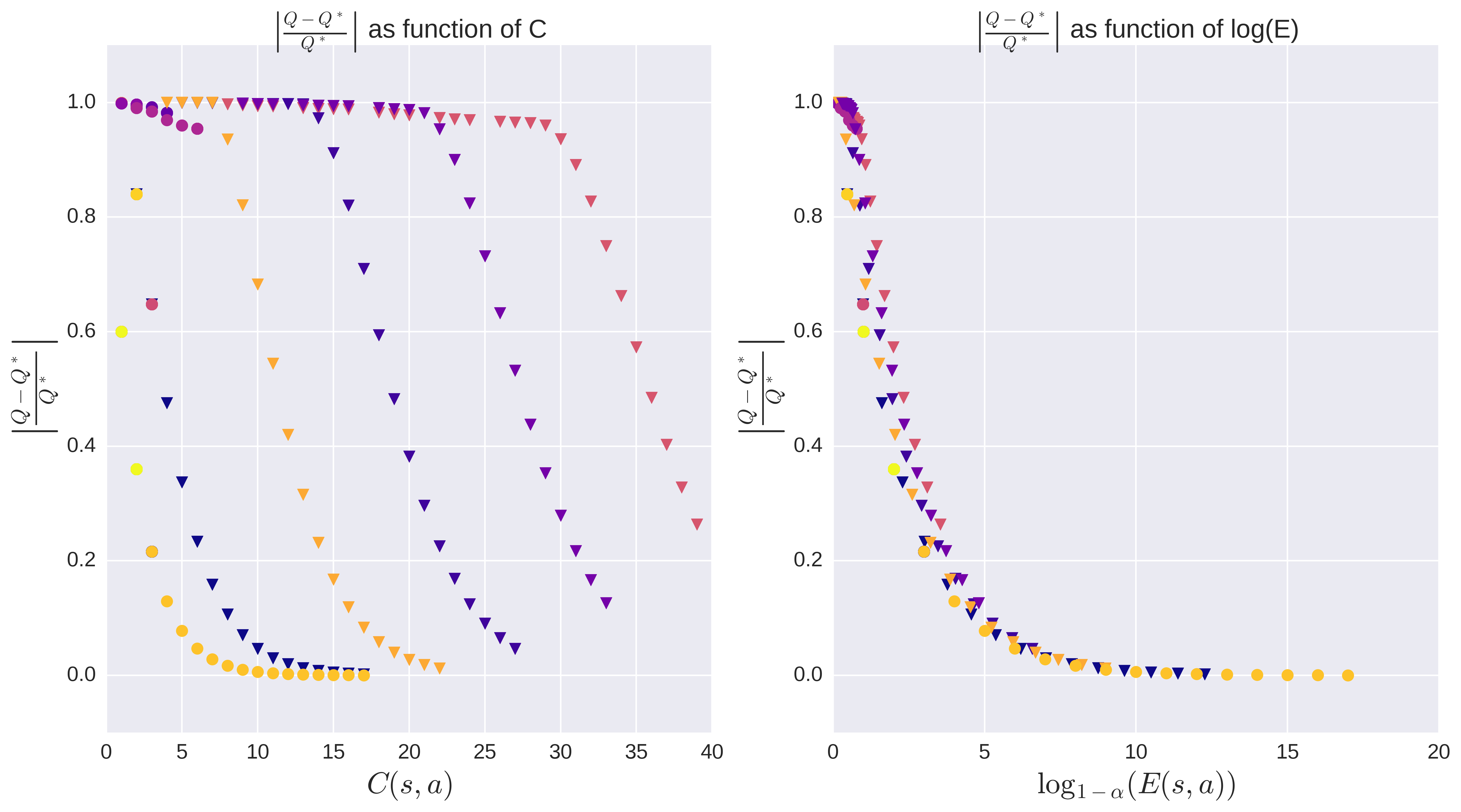}
	\caption{Convergence of $Q$ to $Q^*$ for individual state-action pairs (each denoted by a different color), with respect to counters (left) and generalized counters (right). Results obtained from $E$-Value $LLL$ Softmax on the short bridge environment ($k=5$). Triangle markers indicate pairs with "east" actions, which constitute the optimal policy of crossing the bridge. Circle markers indicate state-action pairs that are not part of the optimal policy. Generalized counters are a useful measure of the amount of missing knowledge.\label{fig:e_c_vs_qstar}}
\end{figure*}

\begin{figure*}[t]
	\centering
	\includegraphics[width=.7\columnwidth]{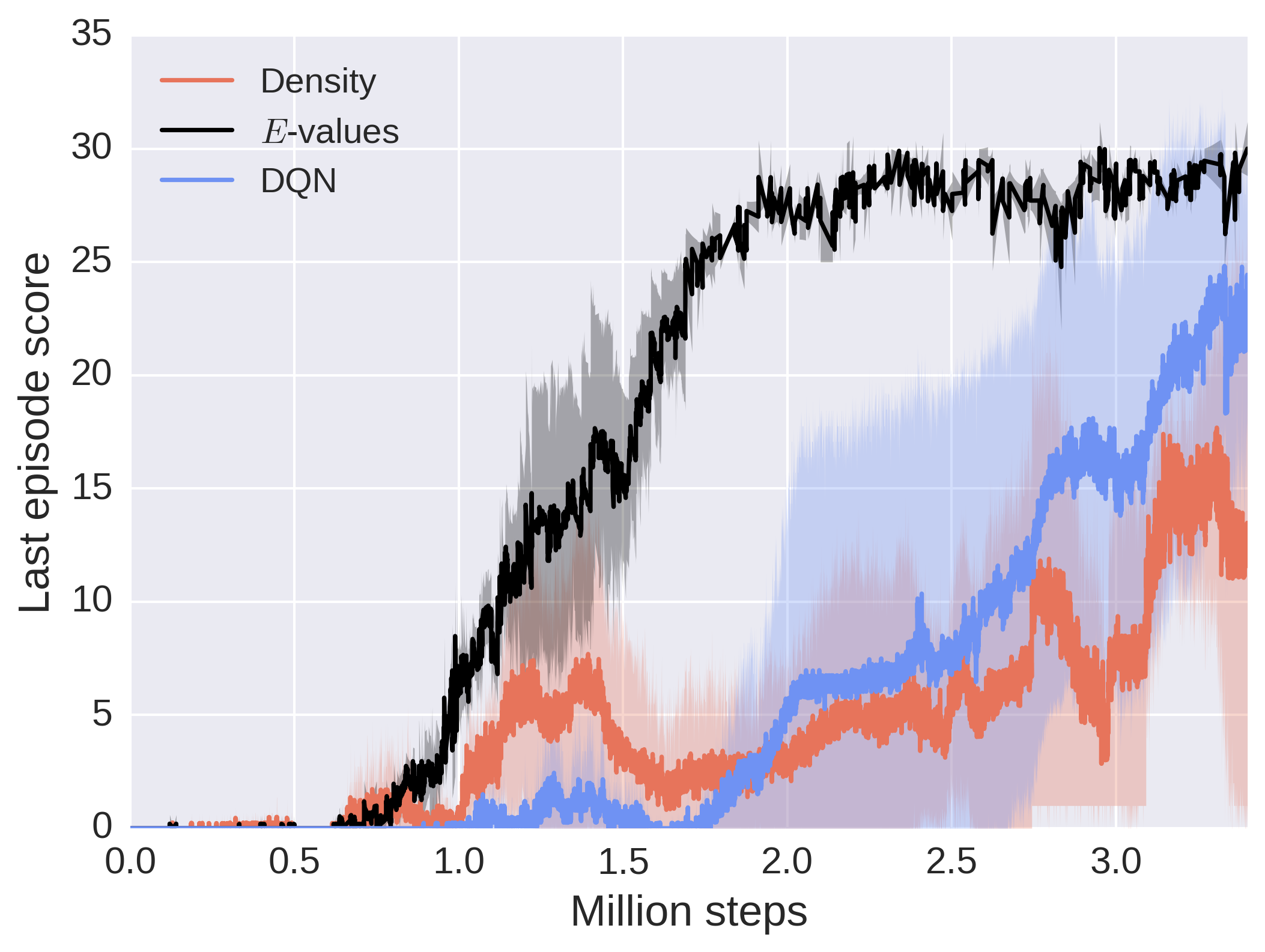}
	\caption{Results on Freeway game. All agents used $\epsilon$-greedy action-selection rule without exploration bonus (DQN, blue), with a bonus term based on density model counters (Density, orange) added to the reward, or with bonus term based on $E$-values (black). \label{fig:fa_freeway}}
	
\end{figure*}


\section{Related Work \label{sec:related}}
The idea of using reinforcement-learning techniques to estimate exploration can be traced back to \citet{storck1995reinforcement} and \citet{meuleau1999exploration} who also analyzed propagation of uncertainties and exploration values. These works followed a model-based approach, and did not fully deal with the problem of non-Markovity arising from using exploration bonus as the immediate reward. A related approach was used by \citet{little2014learning}, where exploration was investigated by information-theoretic measures. Such interpretation of exploration can also be found in other works (\cite{schmidhuber1991curious,sun2011planning,houthooft2016vime}).

Efficient exploration in model-free RL was also analyzed in PAC-MDP framework, most notably the Delayed $Q$ Learning algorithm by \citet{strehl2006pac}. For further discussion and comparison of our approach with Delayed $Q$ Learning, see \ref{subsec:explo_explo} and Appendix \ref{appx:delyed}.

In terms of generalizing Counter-based methods, there has been some works on using counter-like notions for exploration in continuous MDPs \citep{nouri2009multi}. A more direct attempt was recently proposed by \citet{NIPS2016_6383}. This generalization provides a way to implement visit counters in large, continuous state and action spaces by using density models. Our generalization is different, as it aims first on generalizing the notion of visit counts themselves, from actual counters to "propagating counters". In addition, our approach does not depend on any estimated model -- which might be an advantage in domains for which good density models are not available. Nevertheless, we believe that an interesting future work will be comparing between the approach suggested by  \citet{NIPS2016_6383} and our approach, in particular for the case of $\gamma_{E}=0$.

\section{Acknowledgments}
We thank Nadav Cohen, Leo Joskowicz, Ron Meir, Michal Moshkovitz, and Jeff Rosenschein for discussions. This work was supported by the Israel Science Foundation (Grant No. 757/16) and the Gatsby Charitable Foundation.

\bibliography{./sources}
\bibliographystyle{iclr2018_conference}

\newpage
\appendix
\section{Proof of the Determinization Theorem \label{appx:determ_proof}}
The proof for the determinization mentioned in the paper is achieved based on the following lemmata.

\begin{lemma}\label{lemma:positives-eq-negatives}The absolute sum of positive and negative differences between the empiric distribution (deterministic frequency) and goal distribution (non-deterministic frequency) is equal.
	$$\sum_{a: f\left(a\right)\ge\frac{C\left(a\right)}{C}}f\left(a\right)-\frac{C\left(a\right)}{C}=-\sum_{a: f\left(a\right)<\frac{C\left(a\right)}{C}}f\left(a\right)-\frac{C\left(a\right)}{C}$$
\end{lemma}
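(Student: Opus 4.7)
The plan is to argue that the lemma is essentially a restatement of the fact that two probability distributions each sum to one, so their difference sums to zero, which forces the positive and negative parts to cancel.

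First I would observe that $f(\cdot)$ is a probability distribution over actions, so $\sum_{a \in \mathcal{A}} f(a) = 1$. Likewise, by definition $C = \sum_{a} C(a)$, so the empirical frequencies satisfy $\sum_{a \in \mathcal{A}} \frac{C(a)}{C} = 1$. Subtracting these identities gives
\[
\sum_{a \in \mathcal{A}} \Bigl( f(a) - \frac{C(a)}{C} \Bigr) \;=\; 0.
\]

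Next I would split the index set $\mathcal{A}$ into $\mathcal{A}^+ = \{a : f(a) \ge C(a)/C\}$ and $\mathcal{A}^- = \{a : f(a) < C(a)/C\}$, which partition $\mathcal{A}$. Rewriting the zero sum as
\[
\sum_{a \in \mathcal{A}^+} \Bigl( f(a) - \frac{C(a)}{C} \Bigr) \;+\; \sum_{a \in \mathcal{A}^-} \Bigl( f(a) - \frac{C(a)}{C} \Bigr) \;=\; 0
\]
and moving the second sum to the right-hand side yields exactly the claimed equality (up to sign conventions matching the statement).

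There is essentially no obstacle here; the only thing one has to be careful about is the bookkeeping at the boundary $f(a) = C(a)/C$, where the corresponding term contributes $0$ and can therefore be assigned to either side of the partition without affecting the identity. Once this is noted, the lemma follows immediately from the two normalization conditions.
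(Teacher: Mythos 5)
Your proof is correct and follows exactly the same route as the paper, which simply notes that $\sum_{a}f\left(a\right)=\sum_{a}\frac{C\left(a\right)}{C}=1$ and calls the rest straightforward; you have merely spelled out the partition-and-cancel step that the paper leaves implicit. Nothing is missing.
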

\begin{proof}
	Straightforward from the observation that $$\sum_{a}f\left(a\right)=\sum_{a}\frac{C\left(a\right)}{C}=1$$
\end{proof}
\begin{lemma}
	\label{lemma:upper-bound-generalized}
	For any $t$
	\begin{small}
		$$\max_a\left\lbrace \frac{C_t\left(a\right)}{t}-f\left(a\right) \right\rbrace\le\frac{1 + b\left(t\right)}{t}$$
	\end{small}
\end{lemma}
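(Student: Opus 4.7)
My plan is to prove Lemma~\ref{lemma:upper-bound-generalized} by induction on $t$, tracking the deviation $\frac{C_t(a)}{t}-f(a)$ of every action $a$ as $t$ advances by one step. The base case $t=1$ is immediate: the single chosen action $a_1$ has $\frac{C_1(a_1)}{1}-f(a_1)=1-f(a_1)\le 1$, while every other action has $-f(a)\le 0$, and both are dominated by $\frac{1+b(1)}{1}$ when $b\ge 0$.

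For the inductive step, assume $\max_a\bigl(\frac{C_{t-1}(a)}{t-1}-f(a)\bigr)\le\frac{1+b(t-1)}{t-1}$ and split into two cases according to whether $a$ coincides with the action $a_t$ chosen by the policy at step $t$. If $a=a_t$, the policy's choice rule directly guarantees $\frac{C_t(a_t)}{t}-f(a_t)\le b(t)$, and one verifies that this is dominated by $\frac{1+b(t)}{t}$. If $a\neq a_t$, then $C_t(a)=C_{t-1}(a)$ and I would use the identity
\begin{align*}
\frac{C_t(a)}{t}-f(a)
&=\frac{t-1}{t}\!\left(\frac{C_{t-1}(a)}{t-1}-f(a)\right)-\frac{f(a)}{t}\\
&\le\frac{t-1}{t}\cdot\frac{1+b(t-1)}{t-1}-\frac{f(a)}{t}
=\frac{1+b(t-1)-f(a)}{t},
\end{align*}
and conclude using $f(a)\ge 0$ together with the mild assumption that $b$ is non-decreasing in $t$ (which is compatible with the sub-linearity hypothesis used in Theorem~\ref{thm:determinization}).

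The delicate step, which I expect to be the main obstacle, is reconciling the policy-rule bound on the chosen action (phrased in terms of $b(t)$) with the target bound $\frac{1+b(t)}{t}$, which carries the explicit $1/t$ factor. The ``$+1$'' in the numerator is morally the single-step increment to $C_t(a_t)$ from choosing $a_t$, so the bookkeeping has to be arranged so that this increment emerges naturally from applying the rule; whether the ``$\le b(t)$'' in the rule is read pre-step or post-step changes a constant-order term that must be absorbed into the ``$+1$''. Once that accounting is set, the sub-linearity of $b$ handles the cross-terms, while Lemma~\ref{lemma:positives-eq-negatives} itself is not needed for this upper bound but will be used later, when combining it with its lower-bound dual, to finish the proof of Theorem~\ref{thm:determinization}.
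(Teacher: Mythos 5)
Your overall strategy --- induction on $t$ with a case split between the action chosen at the current step and all the others --- is exactly the paper's, and your handling of the non-chosen actions (rescale the inductive hypothesis by $\frac{t-1}{t}$, discard the extra $-\frac{f(a)}{t}\le 0$, and use monotonicity of $b$) matches the paper's algebra line for line. The gap is in the chosen-action case, precisely where you predicted the obstacle would be, and you have not closed it. Reading the selection rule post-step, as you do, gives $\frac{C_t(a_t)}{t}-f(a_t)\le b(t)$, and this is \emph{not} dominated by $\frac{1+b(t)}{t}$: for $t\ge 2$ and $b(t)\ge 1$ one has $\frac{1+b(t)}{t}\le\frac{2b(t)}{t}\le b(t)$, so the implication you propose to ``verify'' is false in general. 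The lemma cannot be recovered from the post-step form of the rule alone.

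The resolution, which is what the paper's proof actually does, is to apply the rule pre-step and to the \emph{un-normalized} deviation: the action $a$ chosen at step $t$ satisfies $C_{t-1}(a)-(t-1)f(a)\le b(t-1)$ before its counter is incremented. Then
\begin{align*}
\frac{C_t(a)}{t}-f(a)=\frac{C_{t-1}(a)+1-tf(a)}{t}=\frac{\bigl(C_{t-1}(a)-(t-1)f(a)\bigr)+1-f(a)}{t}\le\frac{1+b(t-1)}{t}\le\frac{1+b(t)}{t},
\end{align*}
so the ``$+1$'' is exactly the new visit and the $1/t$ comes from averaging it over $t$ steps, as you guessed. Note, however, that the rule as stated in the theorem is normalized, $\frac{C_{t-1}(a)}{t-1}-f(a)\le b(\cdot)$, which only yields $C_{t-1}(a)-(t-1)f(a)\le (t-1)\,b(\cdot)$ and is too weak for this step; the paper's own proof silently substitutes the un-normalized bound. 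So the accounting issue you flagged is not a bookkeeping convention to be chosen freely --- it is the mathematical content of the inductive step, and your proof is incomplete until you commit to the pre-step, un-normalized reading (or otherwise strengthen the hypothesis on the policy).
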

\begin{proof}
	The proof of \ref{lemma:upper-bound-generalized} is done by induction.
	For $t = 1$
	\begin{small}
		$$\forall a\in A: \frac{C_t\left(a\right)}{t}-f\left(a\right) =\max_a\left\lbrace \frac{C_t\left(a\right)}{t}-f\left(a\right) \right\rbrace$$
	\end{small}
	Hence we look at $a\in A$.
	\begin{small}
		\begin{align*}
		\frac{C_t\left(a\right)}{t}-f\left(a\right) &\le \frac{C_t\left(a\right)}{t}\\
		& \le \frac{1 + b\left(1\right)}{1}	
		\end{align*}
	\end{small}
	assume the claim is true for $t=T$
	then for $t=T+1$
	There exists $a$ such that $C_T\left(a\right)/T-f\left(a\right) \le b\left(t\right) $ which the algorithm chooses for this $a$. For it
	\begin{small}
		\begin{align*}
		\frac{C_{T+1}\left(a\right)}{T+1}-f\left(a\right)&=\frac{C_{T}\left(a\right)+1}{T+1}-f\left(a\right)\\
		& = \frac{C_{T}\left(a\right)}{T+1}-f\left(a\right) + \frac{1}{T+1}\\
		& = \frac{C_{T}\left(a\right)-\left(T+1\right)f\left(a\right)}{T+1} + \frac{1}{T+1}\\
		& \le\frac{1 + b\left(t\right)}{T+1}
		\end{align*}
	\end{small}
	It also holds that $\forall a' \in A$ s.t. $a' \neq a $
	\begin{small}
		\begin{align*}
		\frac{C_{T+1}\left(a\right)}{T+1}-f\left(a\right)&=\frac{C_{T}\left(a\right)}{T+1}-f\left(a\right)\\
		&=\frac{C_{T}\left(a\right)-\left(T+1\right)f\left(a\right)}{T+1}\\
		&<\frac{C_{T}\left(a\right)-Tf\left(a\right)}{T+1}\\
		&\le\frac{1 + b\left(t\right)}{T+1}
		\end{align*}
	\end{small}
\end{proof}
\begin{proof}[Proof of 3.1]
	It holds from \ref{lemma:upper-bound-generalized} together with \ref{lemma:positives-eq-negatives}
	that in the step $t$ in the worst case all but one of the actions have $\frac{C_t(a)}{t}-f(a)=\frac{1}{t}$ and the last action has $f(a)-\frac{C_t(a)}{t}=-\frac{\left|A\right|-1}{t}$.
	So by the bound on sum of positives and negatives we get:
	$$\lim\limits_{T\rightarrow \infty} \frac{C_T(a)}{T}=f(a)$$
	
\end{proof}

\section{Comparison with Delayed $Q$-Learning \label{appx:delyed}}
Because Delayed $Q$ learning initializes its values optimistically, which result in a high MSE, we normalized the MSE of the two agents (separately) to enable comparison. Notably, to achieve this performance by the Delayed $Q$ Learning, we had to manually choose a  low value for $m$ (in Figure \ref{fig:delayed}, $m=10$), the hyperparameter regulating the number of visits required before any update. This is an order of magnitude smaller than the theoretical value required for even moderate PAC-requirements in the usual notion of $\epsilon,\delta$, such $m$ also implies learning in orders of magnitudes slower. In fact, for this limit of $m\to 1$ the algorithm is effectively quite similar to a "Vanilla" $Q$-Learning with an optimistic initialization, which is possible due to the assumption made by the algorithm that all rewards are between $0$ and $1$. In fact, several exploration schemes relying on \textit{optimism in the face of uncertainty} were proposed \citep{walsh2009exploring}. However, because our approach separate reward values and exploratory values, we are able to use optimism for the latter without assuming any prior knowledge about the first -- while still achieving competitive results to an optimistic initialization based on prior knowledge.

\begin{figure*}[t]
	
		\centering
		\includegraphics[width=.8\columnwidth,keepaspectratio]{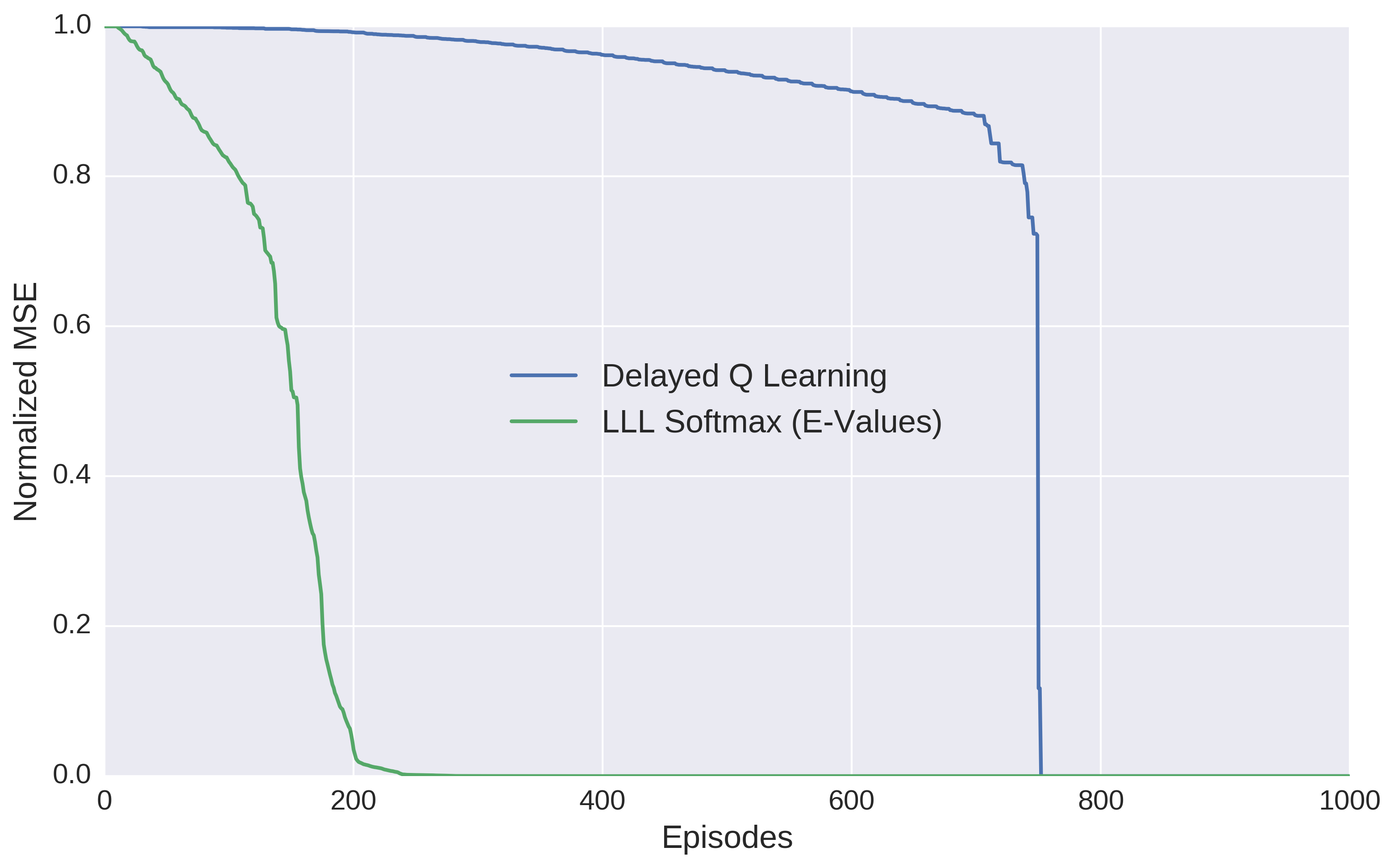}\vfill
		\caption{Normalized MSE between $Q$ and $Q^*$ on optimal policy per episode. Convergence of $E$-value $LLL$ and Delayed $Q$-Learning on, normalized bridge environment ($k=15$). MSE was noramlized for each agent to enable comparison.\label{fig:delayed}}
\end{figure*}

%
%
%

\section{Evaluating $E$-Values dynamics in function-approximation \label{appx:eval_fa}}
To gain insight into the relation between E-values and number of visits, we used the linear-approximation architecture on the MountainCar problem. Note that when using $E$-values, they are generally correlated with visit counts both because visits result in update of the $E$-values through learning and because $E$-values affect visits through the exploration bonus (or action-selection rule). To dissociate the two, $Q$-values and $E$-values were learned in parallel in these simulation, but action-selection was independent of the $E$-values. Rather, actions were chosen by an $\epsilon$-greedy agent. To estimate visit-counts, we recorded the entire set of visited states, and computed the empirical visits histogram by binning the two-dimensional state-space. For each state, its visit counter estimator $\tilde{C}\left(s\right)$ is the value of the matching bin in the histogram for this state. In addition, we recorded the learned model (weights vector for $E$-values) and computed the $E$-values map by sampling a state for each bin, and calculating its $E$-values using the model. For simplicity, we consider here the resolution of states alone, summing over all 3 actions for each state. That is, we compare $\tilde{C}\left(s\right)$ to $\sum_a\log_{1-\alpha}E\left(s,a\right)=C_{E}\left(s\right)$. Figure \ref{fig:tot_visits} depicts the empirical visits histogram (left) and the estimated $E$-values for the case of $\gamma_{E}=0$ after the complete training. The results of the analysis show that, roughly speaking, those regions in the state space that were more often visited, were also associated with a higher $C_{E}\left(s\right)$. 

\begin{figure}[h]
	\begin{subfigure}{0.5\textwidth}
		\includegraphics[height=5.6cm,keepaspectratio]{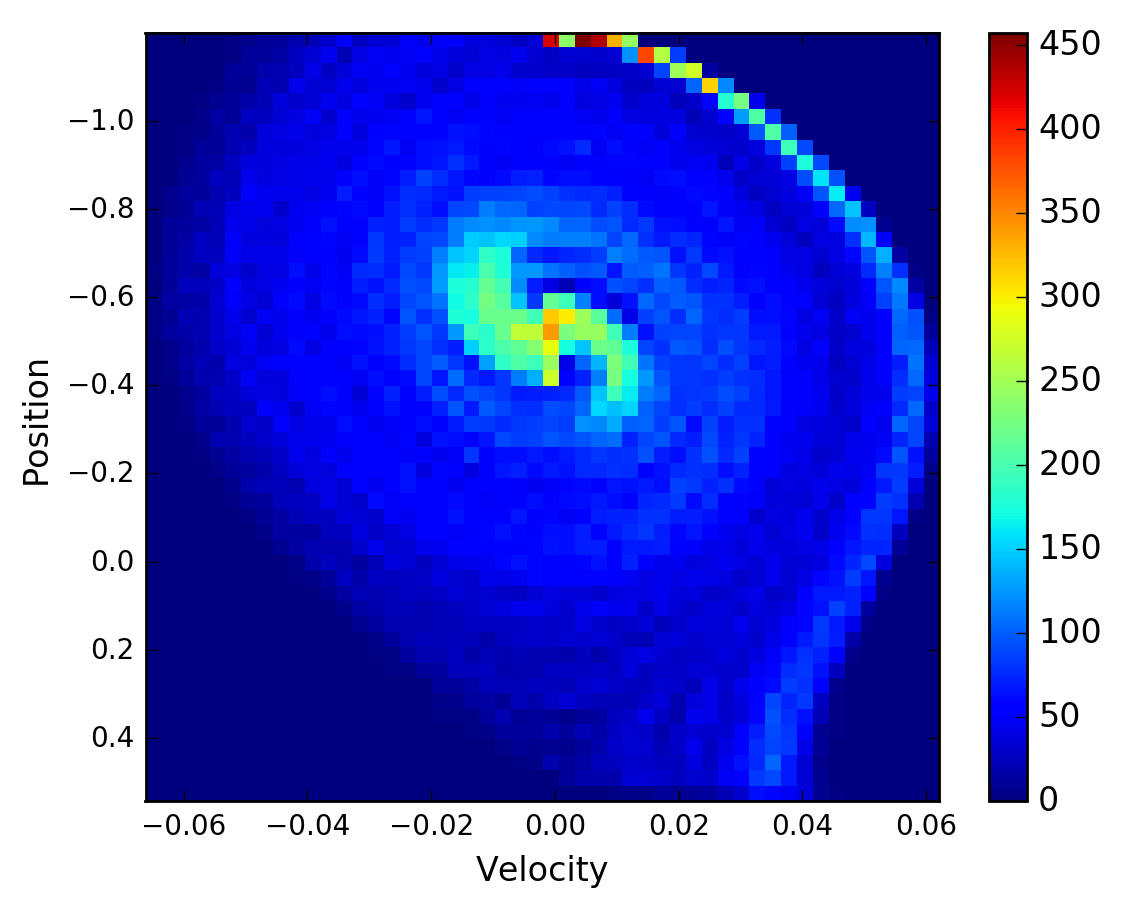}
	\end{subfigure}
	\begin{subfigure}{0.5\textwidth}
		\includegraphics[height=5.5cm,keepaspectratio]{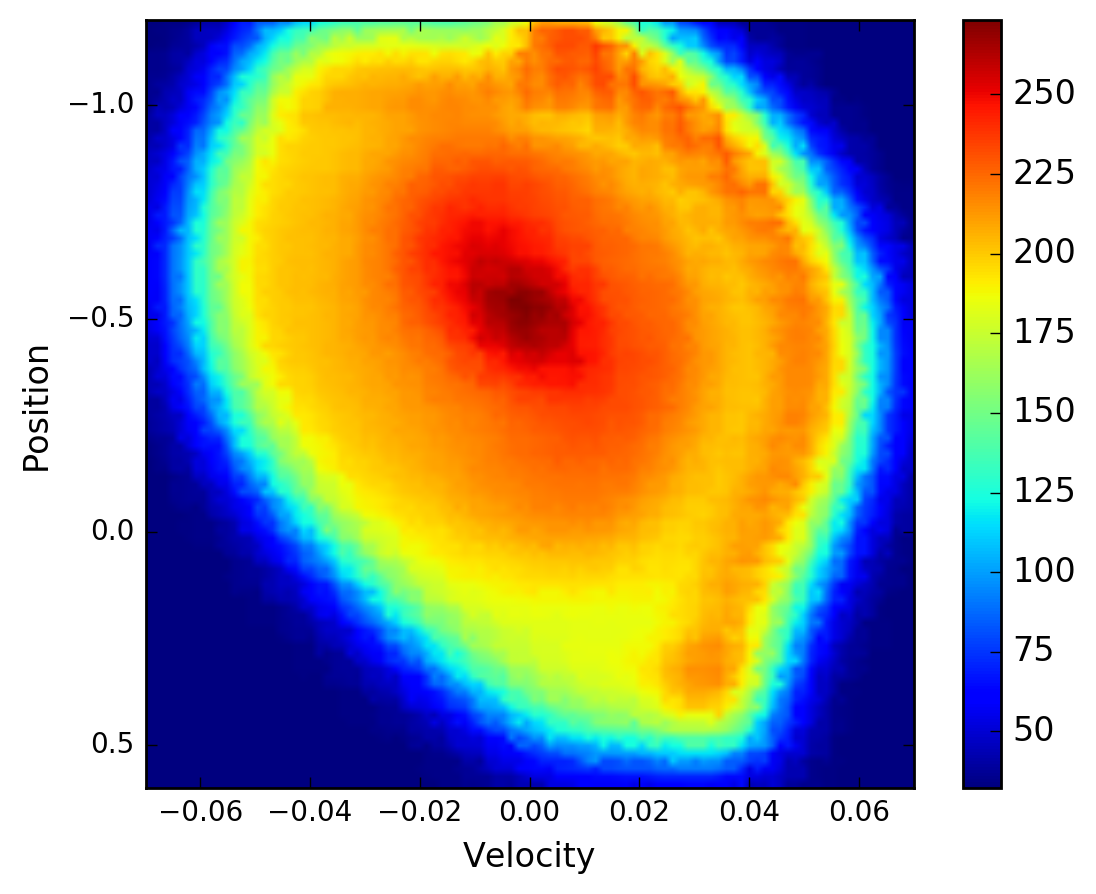}
	\end{subfigure}
	\caption{Empirical visits histogram (left) and learned $C_{E}$ (right) after training, $\gamma_{E}=0$. \label{fig:tot_visits}}
\end{figure}
\FloatBarrier

To better understand these results, we considered smaller time-windows in the learning process. Specifically, Figure \ref{fig:early_visits} depicts the empirical visit histogram (left), and the corresponding $C_{E}\left(s\right)$ (right) in the first 10 episodes, in which visits were more centrally distributed. Figure \ref{fig:late_visits} depicts the \textit{change} in the empirical visit histogram (left), and \textit{change} in the corresponding $C_{E}\left(s\right)$ (right) in the last 10 episodes of the training, in which visits were distributed along a spiral (forming an near-optimal behavior). These results demonstrate high similarity between visit-counts and the $E$-value representation of them, indicating that $E$-values are good proxies of visit counters.

\begin{figure}[h!]
	\begin{subfigure}{0.5\textwidth}
		\includegraphics[height=5.6cm,keepaspectratio]{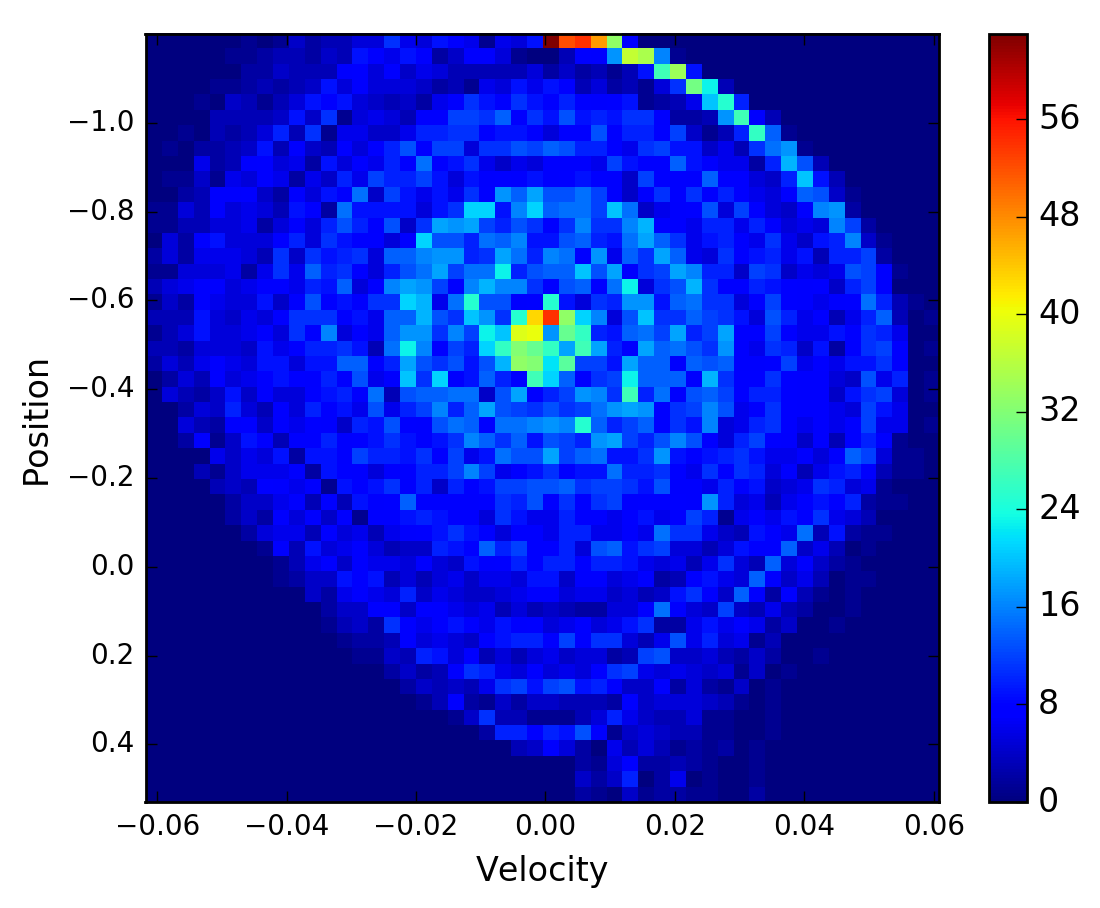}
	\end{subfigure}
	\begin{subfigure}{0.5\textwidth}
		\includegraphics[height=5.5cm,keepaspectratio]{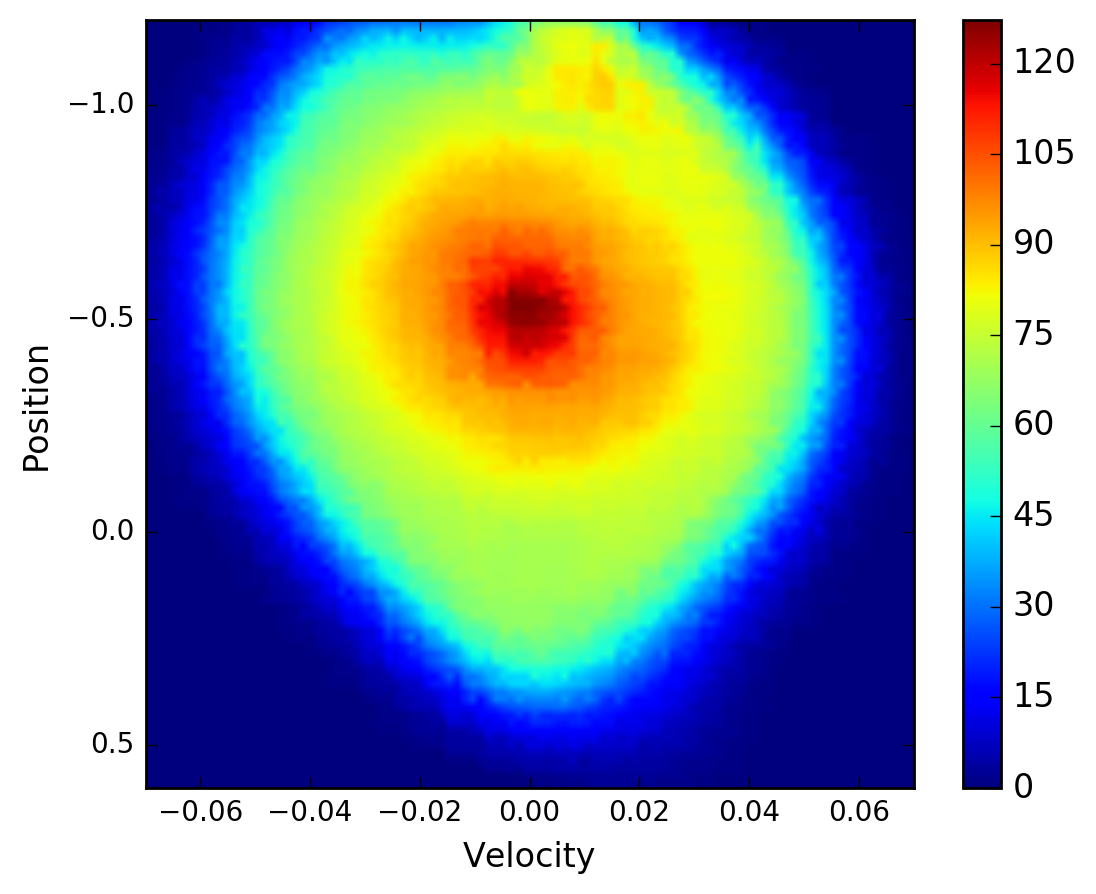}
	\end{subfigure}
	\caption{Empirical visits histogram (left) and learned $C_{E}$ (right) in the first 10 training episodes, $\gamma_{E}=0$. \label{fig:early_visits}}
\end{figure}

\begin{figure}[h]
	\begin{subfigure}{0.5\textwidth}
		\includegraphics[height=5.6cm,keepaspectratio]{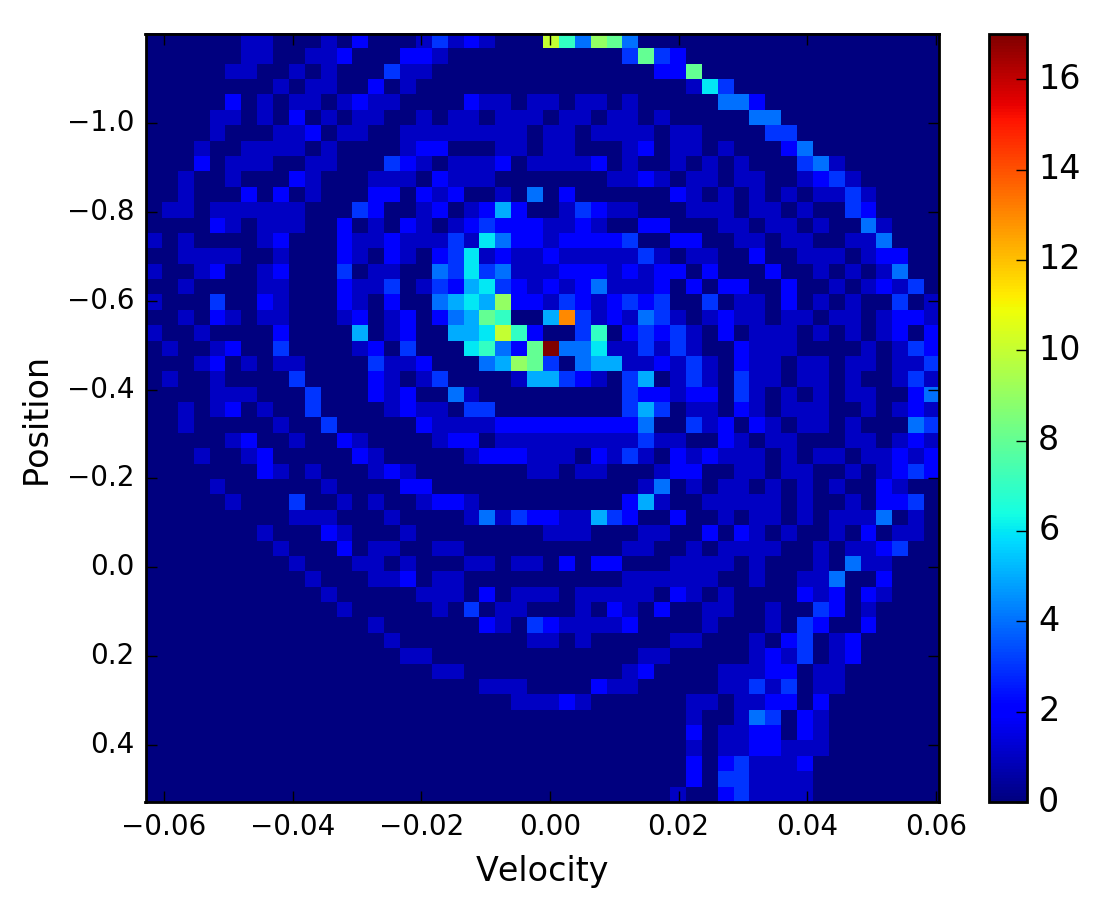}
	\end{subfigure}
	\begin{subfigure}{0.5\textwidth}
		\includegraphics[height=5.5cm,keepaspectratio]{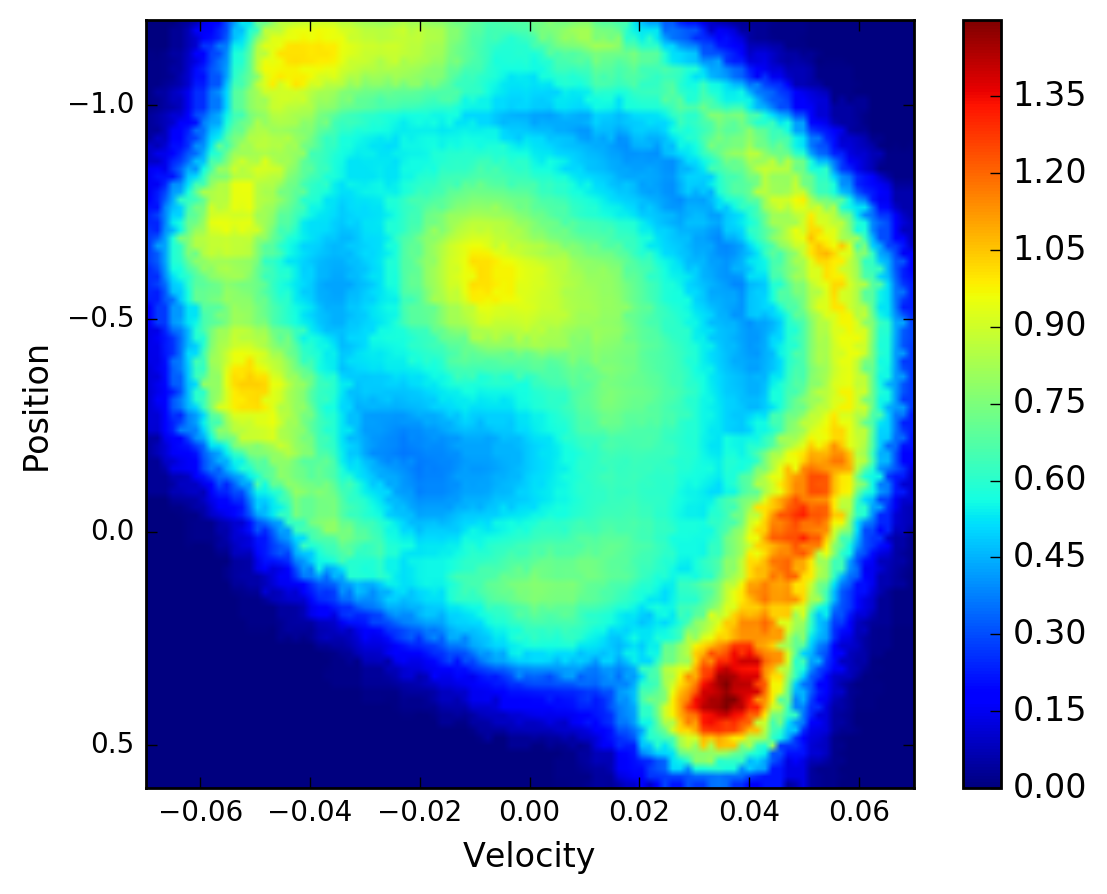}
	\end{subfigure}
	\caption{Difference in empirical visits histogram (left) and learned $C_{E}$ (right) in the last 10 training episodes, $\gamma_{E}=0$. \label{fig:late_visits}}
\end{figure}
\FloatBarrier

The results depicted in Figures \ref{fig:early_visits} and \ref{fig:late_visits} were achieved with $\gamma_{E}=0$. For $\gamma_{E}>0$, we expect the generalized counters (represented by $E$-values) to account not for standard visits but for "generalized visits", weighting the trajectories starting in each state. We repeated the analysis of Figure \ref{fig:late_visits} for the case of $\gamma_{E}=0.99$. Results, depicted in Figure \ref{fig:visits_gamma_99}, shows that indeed for terminal or near-terminal states (where position$>0.5$) generalized visits, measured by difference in their generalized counters, are higher -- comparing to far-from terminal states -- than the empirical visits of these states (comparing to far-from terminal states).

\begin{figure}[h]
	\begin{subfigure}{0.5\textwidth}
		\includegraphics[height=5.6cm,keepaspectratio]{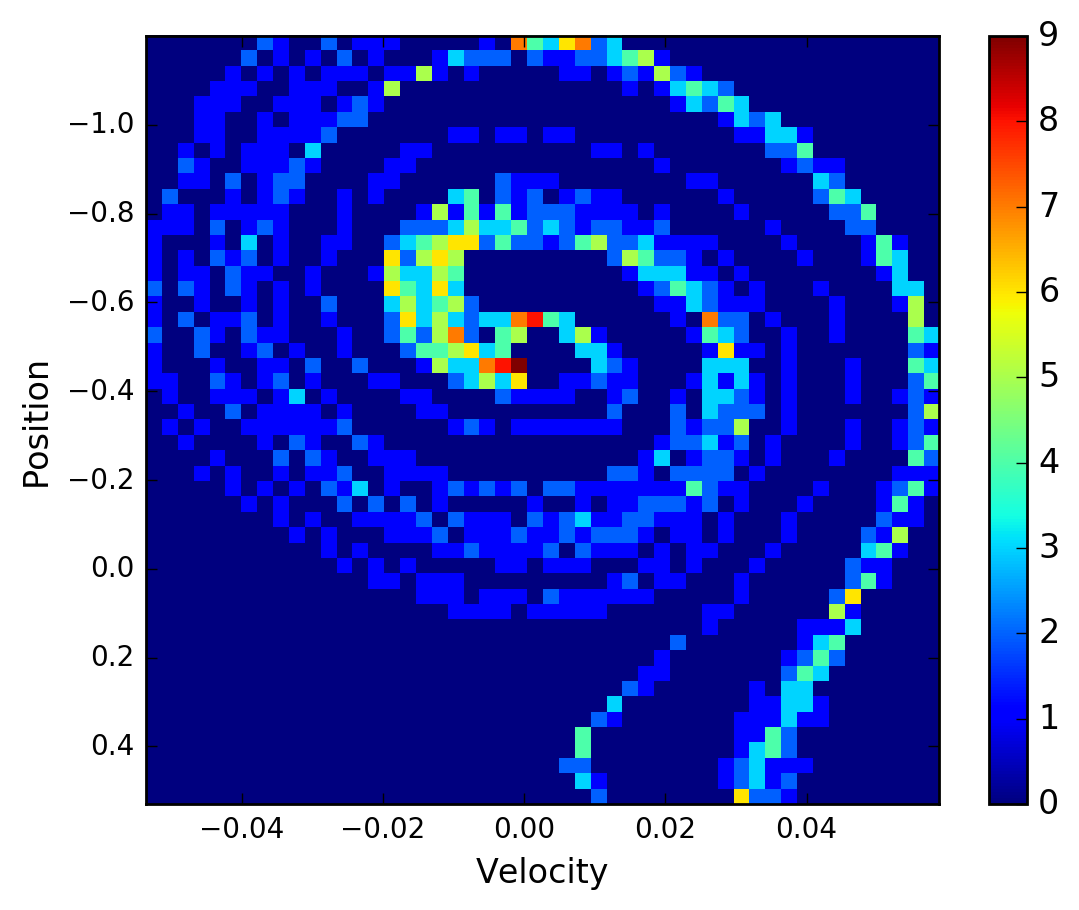}
	\end{subfigure}
	\begin{subfigure}{0.5\textwidth}
		\includegraphics[height=5.5cm,keepaspectratio]{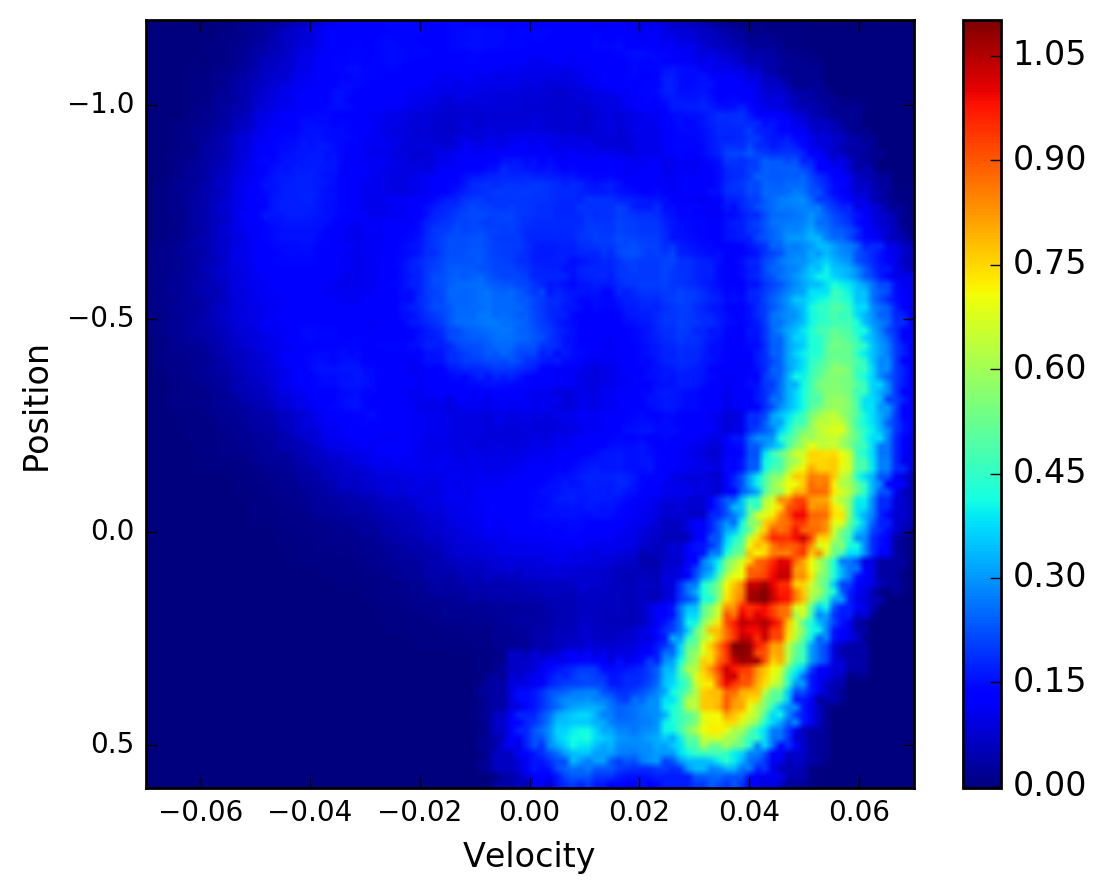}
	\end{subfigure}
	\caption{Difference in empirical visits histogram (left) and learned $C_{E}$ (right) in the last 10 training episodes, $\gamma_{E}=0.99$. Note that the results are based on a different simulation than those in Figure \ref{fig:late_visits}. \label{fig:visits_gamma_99}}
\end{figure}
\FloatBarrier

To quantify the relation between visits and $E$-values, we densely sampled the (achievable) state-space to generate many examples of states. For each sampled state, we computed the correlation coefficient between $C_{E}\left(s\right)$ and $\tilde{C}\left(s\right)$ throughout the learning process (snapshots taken each 10 episodes). The values $\tilde{C}\left(s\right)$ were estimated by the empirical visits histogram (value of the bin corresponding to the sampled state) calculated based on visits history up to each snapshot. Figure \ref{fig:c_e_corr}, depicting the histogram of correlation coefficients between the two measures, demonstrating strong positive correlations between empirical visit-counters and generalized counters represented by $E$-values. These results indicate that $E$-values are an effective way for counting effective visits in continuous MDPs. Note that the number of model parameters used to estimate $E\left(s,a\right)$ in this case is much smaller than the size of the table we would have to use in order to track state-action counters in such binning resolution.

\begin{figure}[h]
	\centering
	\includegraphics[width=.7\columnwidth]{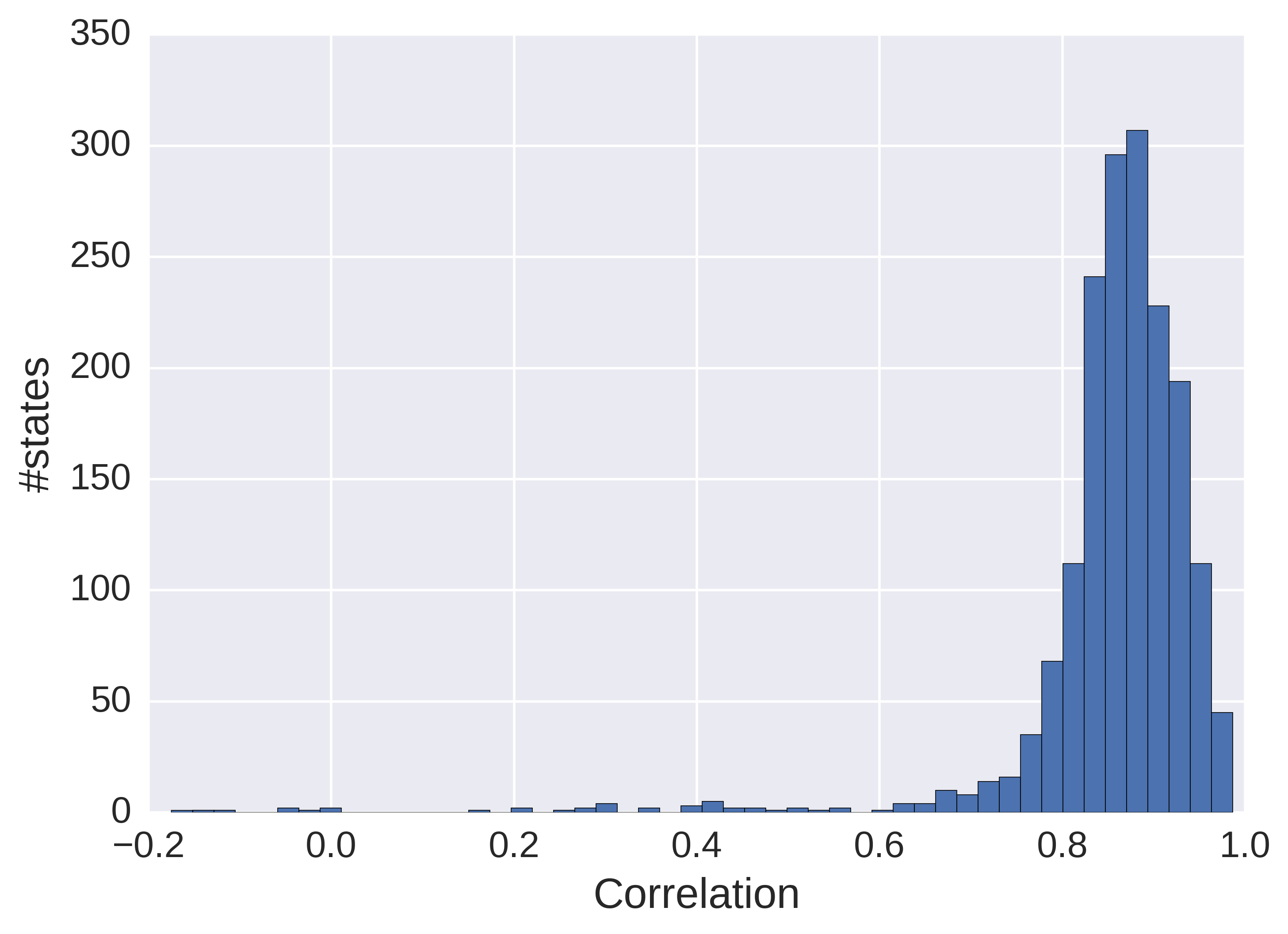}
	\caption{Histogram of correlation coefficients between empirical visit counters and $C_{E}$ throughout training, per state ($\gamma_{E}=0$). \label{fig:c_e_corr}}
\end{figure}
\FloatBarrier

\section{Results on Continuous MDPs -- MountainCar \label{appx:fa_mc}}
To test the performance of $E$-values based agents, simulations were performed using the MountainCar environment. The version of the problem considered here is with sparse and delayed reward, meaning that there is a constant reward of $0$ unless reaching a goal state which provides a reward of magnitude $1$. Episode length was limited to $1000$ steps. We used linear approximation with tile-coding features \citep{sutton1998reinforcement}, learning the weights vectors for $Q$ and $E$ in parallel. To guarantee that $E$-values are uniformly initialized and are kept between $0$ and $1$ throughout learning, we initialized the weights vector for $E$-values to $0$ and added a logistic non-linearity to the results of the standard linear approximation. In contrast, the $Q$-values weights vector was initialized at random, and there was no non-linearity. We compared the performance of several agents. The first two used only $Q$-values, with a softmax or an $\epsilon$-greedy action-selection rules. The other two agents are the DORA variants using both $Q$ and $E$ values, following the $LLL$ determinization for softmax either with $\gamma_{E}=0$ or with $\gamma_{E}=0.99$. Parameters for each agent (temperature and $\epsilon$) were fitted separately to maximize performance. The results depicted in Figure \ref{fig:fa_mc} demonstrate that using $E$-values with $\gamma_{E}>0$ lead to better performance in the MountainCar problem

In addition we tested our approach using (relatively simple) neural networks. We trained two neural networks in parallel (unlike the two-streams single network used for Atari simulations), for predicting $Q$ and $E$ values. In this architecture, the same technique of $0$ initializing and a logistic non-linearity was applied to the last linear of the $E$-network. Similarly to the linear approximation approach, $E$-values based agents outperform their $\epsilon$-greedy and softmax counterparts (not shown).




\begin{figure}
	\centering
	\includegraphics[width=.8\textwidth]{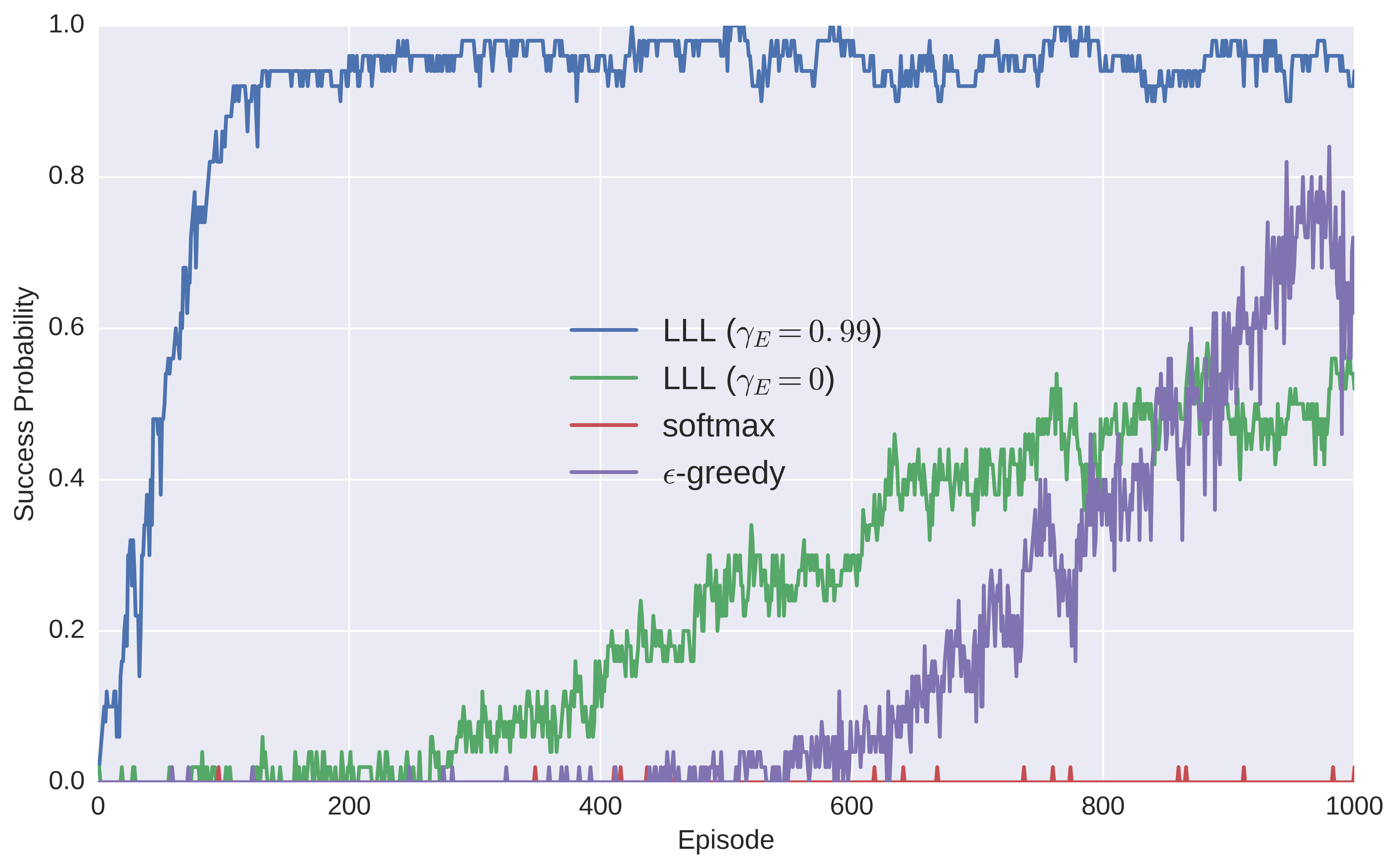}
	\caption{Probability of reaching goal on MountainCar (computed by averaging over 50 simulations of each agent), as a function of training episodes. While Softmax exploration fails to solve the problem within $1000$ episodes, $LLL$ $E$-values agents with generalized counters ($\gamma_{E}>0$) quickly reach high success rates.\label{fig:fa_mc}}
\end{figure}

\end{document}